\newtheorem{proposition}{Proposition}
\newtheorem{definition}{Definition}
\newtheorem{lemma}{Lemma}
\title{MatRL: Provably Generalizable Iterative Algorithm Discovery via Monte-Carlo Tree Search}
\author{%
  Sungyoon Kim \\
  Stanford University\\
  \texttt{sykim777@stanford.edu} \\
  \And
  Rajat Vadiraj Dwaraknath \\
  Stanford University\\
  \texttt{rajatvd@stanford.edu} \\
  \AND
  Longling geng \\
  Stanford University\\
  \texttt{gll2027@stanford.edu} \\
  \And
  Mert Pilanci \\
  Stanford University\\
  \texttt{pilanci@stanford.edu} \\
}
\begin{document}

\maketitle

\begin{abstract}

Iterative methods for computing matrix functions have been extensively studied and their convergence speed can be significantly improved with the right tuning of parameters and by mixing different iteration types \cite{higham1990fast}.
Hand‑tuning the design options for optimal performance can be cumbersome, especially in modern computing environments: numerous different classical iterations and their variants exist, each with non‑trivial per‑step cost and tuning parameters. To this end, we propose \textbf{MatRL} -- a reinforcement learning based framework that automatically discovers iterative algorithms for computing matrix functions. The key idea is to treat algorithm design as a sequential decision-making process. Monte-Carlo tree search is then used to plan a \emph{hybrid} sequence of matrix iterations and step sizes, tailored to a specific input matrix distribution and computing environment. Moreover, we also show that the learned algorithms provably generalize to sufficiently large matrices drawn from the same distribution. Finally, we corroborate our theoretical results with numerical experiments demonstrating that MatRL produces algorithms that outperform various baselines in the literature.




\end{abstract}

\section{Introduction}

Matrix functions are everywhere - ranging from classical applications in control theory \cite{denman1976matrix}, high-dimensional ODEs \cite{higham2008functions}, theoretical particle physics \cite{chen2014newton}, \cite{lin2009fast}, Markov models \cite{waugh1967fractional}, to some recent applications in machine learning, e.g. covariance pooling \cite{li2018towards}, \cite{wang2020deep}, \cite{song2021approximate}, graph signal processing \cite{defferrard2016convolutional}, \cite{maskey2023fractional}, contrastive learning \cite{richemond2023edge}, and optimizer design \cite{gupta2018shampoo}, \cite{jordan2024muon}, \cite{ahn2025dion}. For more applications see \cite{higham2008functions} and references therein. It isn’t a surprise that computing matrix functions in a fast, precise, and stable manner has received numerous attention and many have tried to develop fast algorithms with guarantees that it will work in some sense.


Iterative algorithms to compute matrix functions are particularly attractive in modern applications as they can avoid computing the matrix function directly using the singular value decomposition (SVD). Further, termination criteria can be chosen based on the needs of the application, which can lead to faster and more stable algorithms. Additionally, these algorithms are differentiable, leading to potential applications in auto-differentiation based settings \cite{song2022fast}. From an algorithm design perspective, it would be ideal to find an algorithm that is computationally efficient, numerically stable and has faster convergence guarantees. One can imagine using different iterations from different algorithms at each step (e.g. mixing the algorithms) and specifically tuning the parameters jointly to accelerate the algorithm. However, due to the large search space (see \cref{sec:cmf} and references therein to see a vast array of existing iterative algorithms) and highly sensitive and non-trivial per iteration costs in modern computing environments, handcrafting the ideal algorithm is tedious and impractical.  

\sisetup{
  separate-uncertainty=true,
  table-align-uncertainty=true,
  uncertainty-separator = {\,$\pm$\,},
}

\begin{table}
\caption{Computation times for matrix inverse and multiplication ($4096\times4096$) on CPU and GPU in single (float32) and double (float64) precision. The right‐most column shows the ratio of inverse time to matmul time. All computations done using PyTorch on an RTX A6000 GPU.}
\centering
\label{table:cpugputimes}
\begin{tabular}{
  ll
  S[table-format=3.1(2)]
  S[table-format=3.1(2)]
  S[table-format=1.2]
}
\toprule
Device & Precision & {Inverse (ms)} & {MatMul (ms)} & {Inv : MatMul ratio} \\
\midrule
CPU & float32 & 251.2 \pm 8.7  &  48.8 \pm 1.8  & 5.15 \\
    & float64 & 356.6 \pm 16.3 & 122.9 \pm 4.6  & 2.90 \\
GPU & float32 &  28.3 \pm 0.5  &   5.6 \pm 0.1  & 5.05 \\
    & float64 & 423.9 \pm 6.9  & 252.6 \pm 6.4  & 1.68 \\
\bottomrule
\end{tabular}
\end{table}
A motivating experiment is presented in \Cref{table:cpugputimes}. In single precision, the cost of computing a $4096\times4096$ inverse is roughly five times that of a matrix multiply on both CPU and GPU. However, switching to double precision more than halves this ratio -- dropping to about 2.9 on CPU and 1.7 on GPU -- because matrix multiplies become comparatively more expensive. This pronounced change in relative costs with precision underscores the need for automatic algorithm discovery, as the optimal sequence of matrix iterations will depend sensitively on both hardware and numerical precision. Besides the change in relative speeds, the gains from switching to GPU turns into a slowdown with double precision for both inverse and matmuls. This is due to the significantly lower emphasis on high precision compute capability in modern GPUs. 


To this end, we propose an automated solution based on Monte-Carlo tree search to decide which combination of iterations and parameters one should use given a desiderata of the user. Our solution assumes that the matrix of interest is sampled repeatedly from a certain symmetric random matrix distribution, and we want to find a good algorithm for that matrix distribution. The main idea is that iterative algorithms can be understood as a sequential decision-making process: at each step, one should choose which iteration to use and which parameters to use in that iteration. This corresponds to choosing actions in decision making, where the choice leads to the next step. At each step we get a certain reward signal based on the given desiderata, so that we can evaluate whether the action was worth it or not. Finding a good algorithm mounts to finding a good planning strategy for the given environment. Specifically,

\begin{itemize}[left=0pt]
    \item We propose \textbf{MatRL} (\cref{MatRL}), an automated algorithm searching scheme based on Monte-Carlo tree search. 
    \item The algorithms we find are faster than existing baselines, and even faster than implementations in torch.linalg. Moreover, the found algorithms differ with problem sizes, computation environment, and precision (\cref{sec:Discovered}), meaning that MatRL adapts to different enviroments with ease.
    \item We have a guarantee using random matrix theory that the found algorithm will generalize to different matrices drawn from the same distribution (\cref{sec:RMT}), and matrices drawn from the distribution with identical limiting eigenvalues. 
\end{itemize}

The paper is organized as the following: in \cref{sec:Background} we discuss relevant background on iterative matrix function computation algorithms, Monte-Carlo tree search and learning algorithms via RL. Next we describe our environment in \cref{sec:Environment} by describing how the states, actions, state transition, and reward signals are defined. In \cref{sec:RMT}, we provide the generalization guarantee stemming from limiting distribution of the spectrum. We show experimental results in \cref{sec:exp} showing the performance and adaptivity of MatRL, and conclude the paper in \cref{sec:conclusion}.

\section{Background}
\label{sec:Background}
\subsection{Iterative Methods to Compute Matrix Functions}
\label{sec:cmf}
The basic idea of obtaining an iteration to compute matrix functions is using Newton's method \cite{higham2008functions}. For instance, say we want to compute the matrix square root. We would like to compute the root of the function $f(X) = X^2 - A$, hence Newton's method can be written as
\begin{equation}
\label{sqrt_newton}
X_{k+1} = X_k - f'(X_k)^{-1}f(X_k) = X_k  - \frac{1}{2}X_k^{-1}(X_k^2 - A) = \frac{1}{2}(X_k + X_k^{-1}A).
\end{equation}
Newton's method uses a first-order approximation of $f$ at $X_k$: we may use higher order approximations to obtain Chebyshev method \cite{li2011chebyshev} or Halley's method \cite{nakatsukasa2010optimizing}, \cite{guo2010newton}. These higher-order methods converge cubically with appropriate initialization, but each iteration is slower than Newton's method. We could also use inverse-free methods such as Newton-Schulz and its variants \cite{higham2008functions}, \cite{higham1997stable}, which approximates $X_k^{-1}$ with a polynomial of $X_k$.

Appropriate scaling and shifting of the spectrum to yield faster convergence has been a popular idea \cite{nakatsukasa2010optimizing}, \cite{byers2008new}, \cite{byers1987solving}, \cite{chen2014newton}, \cite{iannazzo2003note}, \cite{hoskins1979faster}, \cite{pan1991improved}. The intuition is that by introducing additional parameters and solving an optimization problem on the spectrum, we can find a sequence of optimized parameters that depend on the spectrum of $A$, the matrix that we would like to compute matrix function. For instance, \cite{chen2014newton} finds a cubic function $h : [0,1] \rightarrow [0,1]$ that maximizes $h'(0)$ to find better scaling of Newton-Schulz iteration. One drawback of these approaches is that in many cases we need to compute smallest / largest eigenvalues of the matrix \cite{byers2008new}, \cite{chen2014newton}, \cite{pan1991improved}, \cite{nakatsukasa2010optimizing}, \cite{hoskins1979faster}, which may be expensive to compute. In this case we use approximations of the smallest eigenvalue, such as $1/\lVert A ^{-1} \rVert_F$. Another drawback is that each new scaling scheme needs a complicated mathematical derivation and proof.

Naively applying Newton's method can be unstable for computing matrix square-root and $p$-th root. To ensure stability, 
we can introduce an auxillary variable $Y_k$ and simultaneously update $X_k$ and $Y_k$ \cite{higham1997stable}, \cite{iannazzo2006newton}. We will refer to such iterations as coupled iterations. Coupled iterations are obtained by manipulating the formula so that we do not have $A$ in the iteration. Going back to Newton's method for computing square roots in \cref{sqrt_newton}, we can introduce an auxillary variable $Y_k = A^{-1}X_k$ and initialize $X_0 = A, Y_0 = I$ to obtain the coupled iteration known as Denman-Beavers iteration \cite{denman1976matrix},
\begin{equation}
\label{eq:sqrt_coupled}
\begin{cases}
X_{k+1} = \frac{1}{2}(X_k + Y_k^{-1})\\
Y_{k+1} = \frac{1}{2}(Y_k + X_k^{-1}).
\end{cases}    
\end{equation}
Using perturbation analysis, \cite{higham1997stable} shows that the iteration in \cref{sqrt_newton} is unstable when the condition number $\kappa(A) > 9$, whereas the iteration in \cref{eq:sqrt_coupled} is stable.  

Iterative algorithms are not limited to Newton's method. We may have fixed-point iterations such as Visser iteration \cite{higham2008functions}, where we iteratively compute
\[
X_{k+1} = X_k + \alpha_k (A - X_k^2),
\]
to compute matrix square root. We may also use higher-order rational approximations of the function of interest to obtain algorithms that converge in only a few steps \cite{nakatsukasa2016computing}, \cite{gawlik2019zolotarev}, and with sufficient parallelization they can be faster than existing methods. 

\subsection{Monte-Carlo Tree Search}
\label{sec:MCTS}

Suppose we have a deterministic environment $\mathcal{E}$ which is defined by a 5-tuple, $(\mathcal{S}, \mathcal{A}, \mathcal{T}, r, t)$. $\mathcal{S}$ denotes the set of states, $\mathcal{A}$ denote the set of actions that one can take in each state, $\mathcal{T}: \mathcal{S} \times \mathcal{A} \rightarrow \mathcal{S}$ gives the how state transition occurs from state $s$ when we apply action $a$, $r: \mathcal{S} \times \mathcal{A} \rightarrow \mathbb{R}$ gives the amount of reward one gets when we do action $a$ at state $s$, and $t: \mathcal{S} \rightarrow \{0, 1\}$ denotes whether the state is terminal or not. Monte-carlo tree search enables us to find the optimal policy \cite{browne2012survey} $\pi: \mathcal{S} \rightarrow \mathcal{A}$ that decides which action to take at each state to maximize the reward over the trajectory that $\pi$ generates. 

The basic idea is to traverse over the ``search tree" in an asymmetric manner, using the current value estimation of each state. Each node of the search tree has a corresponding state $s$, its value estimation $V_s$, and visit count $N_s$. The algorithm is consisted of four steps, the selection step, the expansion step, the simulation step, and the backpropagation step.

\textbf{During the selection step,} the algorithm starts at the root node $s_0$ and selects the next node to visit using $V_{\mathcal{T}(s,a)}$ and $N_{\mathcal{T}(s,a)}$. One widely-used method is using the upper confidence bounds for trees (UCT) \cite{kocsis2006bandit}. At node $s$, the UCT is defined by
\[
V_{\mathcal{T}(s,a)} + C \sqrt{\frac{\log(N_{s})}{N_{\mathcal{T}(s,a)}}},
\]
where $C$ determines the exploration-exploitation tradeoff. The algorithm selects the next state $\mathcal{T}(s,a)$ that maximizes UCT, and such selection continues until the algorithm meets a node of which not all child nodes have been visited. 

\textbf{In the expansion step,} the algorithm adds a child node that has not been visited to the search tree.

\textbf{In the simulation step,} the algorithm starts from the added node in the expansion step and uses a default policy to generate a trajectory until they meet a termination criterion. The default policy can either be a random policy or handcrafted heuristics \cite{chaslot2008progressive}. 

\textbf{Finally, in the backpropagation step, } each value estimation $V_s$ on the trajectory is updated.

Monte-Carlo tree search gained its popularity to obtain strategies for games such as Tictactoe \cite{veness2011monte} or Go \cite{silver2017mastering}, \cite{hoock2010intelligent}, as well as real-time stategic games \cite{soemers2014tactical}. Not only that, the algorithm was also applied to combinatorial optimization problems \cite{sabharwal2012guiding}, \cite{rimmel2011optimization}, symbolic regression \cite{kamienny2023deep}, and complex scheduling problems \cite{chaslot2006monte}, \cite{matsumoto2010evaluation}, \cite{li2021effective} - which is most relevant to our work.

\subsection{Automated Algorithm Discovery}

We are not the first to discover algorithms using ideas from sequential decision-making. RL-based approaches which parametrize the policy as a neural network have proven to be successful: \cite{li2017learning} learns to optimize neural networks using an RL framework. In their framework, the states consist of past variables $x_i$, past gradients $\nabla f(x_i)$, and past objectives $f(x_i)$, and the policy aims to learn appropriate $\Delta x$. \cite{fawzi2022discovering} used reinforcement learning to find faster matrix multiplication algorithms, and \cite{mankowitz2023faster} finds faster sorting algorithms. \cite{khodak2024learning} has a similar flavor with our work, where they use contextual bandits to optimize relaxation parameters in symmetric success-over-relaxation.

\section{MatRL: Iterative Matrix Function Algorithm Search via RL}
\label{sec:Environment}

\subsection{Objective}
\label{sec:Objective}

Let $A \sim \mathcal{D}$, where $\mathcal{D}$ is a symmetric random matrix distribution defined in $\mathbb{R}^{n \times n}$ and $f:\mathbb{R} \rightarrow \mathbb{R}$ is the function that we would like to compute. We follow the definition of matrix functions in \cite{higham2008functions}. For a symmetric matrix $A$, when we diagonalize $A = UDU^{T}$ for an orthogonal matrix $U$, $f(A) = Uf(D)U^{T}$ where $f(D)$ applies $f$ to the diagonal entries of $D$. Denote $f_1(X,Y,A,a_1), f_2(X,Y,A,a_2), \cdots f_m(X,Y,A,a_m)$ as $m$ choices of iterations that we can use, $a_j \in \mathbb{R}^{n_j}$ as tunable parameters for each $f_j$, and $T_j$ the wall-clock time needed to run $f_j$. We use two variables $(X, Y)$ as input to accomodate coupled iterations, and $Y$ is not used for iterations that are not coupled. For instance, for $f=\sqrt{\cdot}$, $f_1(X, Y, a_1)$ can be the scaled Denman-Beavers iteration
\[
f_1(X, Y, A, a_1) = \Big(\frac{1}{2} (a_{11}X + (a_{12}Y)^{-1}), \frac{1}{2} (a_{12}Y + (a_{11}X)^{-1}) \Big),
\]
whereas $f_2(X, Y, A, a_2)$ can be the scaled Visser iteration
\begin{equation}
\label{eq:Visser_naive}
f_2(X, Y, a_2) = \Big(a_{21}X + a_{22}(A - X^2), Y \Big).
\end{equation}
Here, $n_j$ denotes the number of tunable parameters in iteration $f_j$. Also, assume the error tolerance $\epsilon_{\text{tol}}$ is given.

Now, we specify the class of matrix iterations and the custom loss function $\mathcal{L}: \mathbb{R}^{n\times n} \times \mathbb{R}^{n\times n} \rightarrow \mathbb{R}$ that determines the termination condition. For this we define congruence invariant matrix functions.
\begin{definition}
(Congruence Invariant Diagonal Preserving) Let $f: (\mathbb{R}^{n \times n})^k \rightarrow \mathbb{R}^{m\times m}$ a matrix function that takes $k$ matrices as input and outputs a matrix. If $f(QX_1Q^{T}, QX_2Q^{T}, \cdots QX_kQ^{T}) = Qf(X_1, X_2, \cdots X_k)Q^{T}$ for all orthogonal $Q$, $f$ is congruent invariant. If $f(X_1, X_2, \cdots, X_k)$ is diagonal for diagonal $X_1, X_2, \cdots X_k$, $f$ is diagonal preserving. \\
For functions that take matrix tuple as an input and outputs a matrix tuple, $F: (\mathbb{R}^{n \times n})^k \rightarrow (\mathbb{R}^{m\times m})^{l}$, we call $F$ congruent invariant diagonal preserving if $F = (f_1, f_2, \cdots f_l)$ and all $f_i$ are congruent invariant and diagonal preserving for $i \in [l]$.
\end{definition}
We limit the matrix iterations and the loss function to be congruent invariant functions, regarding $A$ also as an input. For instance, the Newton-Schulz iteration to compute matrix inverse \cite{pan1991improved} $X_{k+1} = 2X_k - X_kAX_k,$
$f(X,A) = 2X - XAX$ is a congruence invariant function and the iteration falls into our framework. Such limitation will enable us to understand actions and losses as functions of the spectrum of $X, Y$, and $A$, and as we will see in the next section, it will enable us to see the whole environment only as a function of the spectrum. 

Our objective is given a sample $A$ from $\mathcal{D}$, finding a sequence of iterations and coefficients $f_{t_1}(\cdot, a_{t_1}), f_{t_2}(\cdot, a_{t_2}), \cdots f_{t_N}(\cdot, a_{t_N})$ that is a solution to
\begin{equation}
\label{eq:obj}
  \max_{N, t_i \in [m], a_{t_i} \in \mathbb{R}^{n_i},\ i \in [N]} \quad -\sum_{i=1}^{N} T_{t_i} \quad \text{subject \ to} \quad  \mathcal{L}(X_{N+1}, A) \leq \epsilon_{\text{tol}},
\end{equation}
where $X_0 = A, Y_0 = I$ or $X_0 = I, Y_0 = A$ depending on $f$, $(X_{k+1},Y_{k+1}) = f_{t_k}(X_k, Y_k,  a_{t_k})$ and $N$ is also optimized. Note that similar ideas have also appeared in optimal control \cite{evans1983introduction} in the context of minimum-time control problems.

The optimal solution of \cref{eq:obj} naturally corresponds to finding the optimal iterative algorithm 
 \[
 X_{k+1}, Y_{k+1} \leftarrow f_{t_k}(X_k, Y_k, a_{t_k}), \quad k \in [N].
 \]
 that arrives as $\mathcal{L}(X,A) \leq \epsilon_{\text{tol}}$ as fast as it can. 
\subsection{The Environment}
\label{sec:env}

Here we elaborate how we formulate the problem in \cref{eq:obj} to a sequential decision-making problem by describing $\mathcal{E} = (\mathcal{S}, \mathcal{A}, \mathcal{T}, r, t)$, the 5-tuple specified in \cref{sec:MCTS}.

The simplest way to define the state and action variables is by setting each state as a tuple $(X, Y)$ that corresponds to the matrices $(X_k, Y_k)$, and setting each action as a $n_j + 1$ - tuple $(j, k_1, k_2, \cdots k_{n_j})$, where $j \in [m]$ denotes the iteration $f_j$ and $k_1, k_2, \cdots k_{n_j}$ denotes the parameters for $f_j$. The state transition $\mathcal{T}$ simply becomes  
\[
\mathcal{T}(X, Y, j, k_1, k_2, \cdots k_{n_j}) = f_j(X, Y, k).\ 
\]
At each transition, we get rewarded by $-T_j$, the negative time needed to run the iteration. The terminal state is when $\mathcal{L}(X, A) \leq \epsilon_{\text{tol}}$. Our environment stems from this basic formulation, but we have important implementation details that we elaborate below.

\textbf{Spectrum as state variables} Having matrices each state can consume a lot of memory, and state transition may be slow. Instead, we use $(s_1, s_2)$, where $s_1, s_2 \in \mathbb{R}^{n}$ corresponds to the eigenvalues of $X, Y$. Such parametrization is justified by the fact that both the transition $f_k(X, Y, A, a) = (X', Y')$ and the termination criteria $\mathcal{L}$ can be expressed only using the spectrum for our iterations of interest. 

The core intuition is that when we write $A = UD_AU^{T}$, if $X = UD_XU^{T}$ and $Y = UD_YU^{T}$ for the same $U$ and diagonal $D_A, D_X, D_Y$, the next states $X', Y'$ are similar to $A$. Moreover, we can see that the spectrum of $X', Y'$ only depends on the spectrum of $X, Y$ and $A$. Induction finishes the proof. The similarity result and the congruent invariance of $\mathcal{L}$ shows that the termination criteria only depends on the spectrurm. We defer the proof to Appendix A.


\textbf{Decoupled actions} 
We have $n_j+1$ tuple of possible actions each state. As they are continuous variables, the search space becomes huge. To mitigate this, we decouple the state transition $\mathcal{T}$ into $n_j+1$ stages: at stage 1, 2, $\cdots n_j$, only parameters $k_1, \cdots k_{n_j-1}$ are chosen. At state $n_j+1$, we get rewarded by $-T_j$, choose next iteration, and state transition happens. 

\textbf{Dealing with coupled iterations}
For computing matrix roots, some iterations are coupled (e.g., Denman-Beavers), while others like the scaled Visser iteration are not, making it challenging to mix them directly. Applying uncoupled iterations alone can break essential relationships (e.g., $Y_k X_k^{-1} = A$ for Denman-Beavers), potentially leading to incorrect results. To address this, we propose either augmenting uncoupled steps with coupled ones or tracking a boolean flag IsCoupled that governs when and how to restore consistency between variables before proceeding.
\subsection{The Searching Strategy}
\label{sec:SearchingStrategy}

Here we describe the details of Monte-Carlo tree search in the predescribed environment $\mathcal{E}$. Let's note $n, v: \mathcal{S} \rightarrow \mathbb{N}$ the number of child nodes and visit count of that node, respectively. If all parameters $j, k_1, \cdots k_{n_j}$ are chosen and the state is ready for state transition, we call the state \emph{transitionable}. We use progressive widening \cite{couetoux2011continuous} to deal with the continuous parameter space: during the selection stage, if the node is transitionable and hasn't visited all possible children, or the number of child nodes $n(s) \leq Cv(s)^{\alpha}$ for some hyperparameters $C, \alpha$, we go to the expansion step. If not, we choose the next node with UCT \cite{kocsis2006bandit}. In the expansion stage, we add a child node. Choosing the iteration is discrete and we choose them depending on IsCoupled flag. To choose the parameters, we first sample randomly for $E$ steps, then jitter around the best parameter found. In the rollout stage, we have predetermined baseline algorithms and run one of them to estimate the value of the state. At last, we backpropagate by using the Bellman equation
\[
V_s = \max_{a \in \mathcal{A}} V_{\mathcal{T}(s,a)} + r(s,a).
\]
Our search method is summarized in \cref{MatRL}. Details on the parameters for each experiment and a thorough description of \cref{MatRL} can be found in Appendix C.
\begin{algorithm}
\caption{MatRL: Monte-Carlo Tree Search for Algorithm Discovery}
\begin{algorithmic}
  \label{MatRL}   
  \STATE \textbf{Input:} $C, \alpha, \epsilon_{\text{tol}}, E, T, RolloutList, \mathcal{L}, A \sim \mathcal{D}$
  \STATE Initialize $c, n, t, cp \gets 0$, $V \gets -\text{INF}$, $cp[s_{root}] \gets 1$, bestpath, bestrollout $\gets 0$ // Each correspond to number of children, visit count, Transitionable, IsCoupled, and value estimation.
  \FOR{$i = 1$ to $T$}
  \STATE $s \gets s_{root}$
  \WHILE {$Expandable(s)$ == False and $\mathcal{L}(s) \leq \epsilon_{\text{tol}}$}
        \STATE // Expandable if $s$ is transitionable and has a child node yet visited, or $c(s) \leq Cn(s)^{\alpha}$
        \STATE $s \gets Best_{UCB}(s)$  
  \ENDWHILE
  \STATE $s \gets ExpandNode(s)$ // Here we expand after we look at $cp(s)$
  \STATE $r \gets SampleRolloutList()$ // Here we sample from RolloutList, the baselines selected for rollout. If the baseline is coupled but $cp(s)= False$, we attach an additional coupling step at the front 
  \STATE $s \gets r(s)$
  \STATE bestpath, bestrollout $\gets backpropagate(s)$ // Here we use bellman equation. If $V(s_{root})$ was updated, update bestpath and bestrollout
  \ENDFOR
  \RETURN bestpath $\oplus$ bestrollout
\end{algorithmic}
\end{algorithm}
\vspace{-0.5cm}
\section{Random Matrices and Generalization Guarantees}
\label{sec:RMT}
The objective in \cref{eq:obj} aims to find the optimal algorithm for a given matrix $A$ sampled from $\mathcal{D}$. Here we show that under certain assumptions, the found iterative algorithm has a sense of generalization capability to a different matrix distribution $\mathcal{D}'$ with the same limiting distribution. Two main ideas for the proof is: first, the loss curve only depends on the spectrum. Second, if the limiting distributions are identical for $\mathcal{D}$ and $\mathcal{D}'$, the sampled matrices' spectrum will be similar if the matrix is sufficiently large. Hence, the loss curve will be similar for two matrices $X \sim \mathcal{D}$, $Y \sim \mathcal{D}'$ for sufficiently large matrices, and if the algorithm works well for $X$, it will work well for $Y$. The specific guarantee that we have is in \cref{prop:guarantee}. The proof is deferred to Appendix A.

\begin{proposition}
\label{prop:guarantee}
(Generalization of the discovered algorithm) Say we have a sequence of symmetric random matrix distributions $\mathcal{P}_m, \mathcal{Q}_m$ defined in $\mathbb{R}^{m \times m}$, and denote random matrices sampled from $\mathcal{P}_m$, $\mathcal{Q}_m$ as $X, Y$. Let the empirical eigenvalue value distribution of $X \sim \mathcal{P}_m, Y \sim \mathcal{Q}_m$ be $\mu_m(X), 
 \nu_m(Y)$, and their support be $S_m, S'_m$, respectively. Now, suppose\\
(i) (Identical limiting distribution) 
\[
\mathbb{P}(\mu_m(X) \Rightarrow \mu^{*}) = \mathbb{P}(\nu_m(Y) \Rightarrow \mu^{*}) = 1,
\]
i.e. both $\mu_m(X)$ and $\nu_m(Y)$ converges weakly to a common distribution $\mu^{*}$ with probability 1.\\
(ii) (Interval support of the limiting distribution) The support of $\mu^{*}$ is an interval $[a, b]$. \\
(iii) (Convergence of support) We have
\[
\lim_{m \rightarrow \infty} \mathbb{P}(S_m \subseteq [a-\epsilon, b+\epsilon]) = \lim_{m \rightarrow \infty} \mathbb{P}(S'_m \subseteq [a-\epsilon, b+\epsilon]) = 1,
\]
for all $\epsilon > 0$.\\
With the assumptions above, let $f$ be the matrix function we would like to compute, $f^{*}_k$ the step $k$ transformation of eigenvalues of the algorithm found by \cref{MatRL}, and $\mathcal{L}$ be the loss. Assume $f, f_k^{*}, \mathcal{L}$ are continuous in $[a - \epsilon_0,b + \epsilon_0]$ for some $\epsilon_0 > 0$. Write the empirical loss of the random matrix $X$ as
\[
\mathcal{L}_k(X) = \frac{1}{m} \sum_{i=1}^{m} L(f(\lambda_i), f_k^{*}(\lambda_i)),
\]
where $\lambda_i$ are eigenvalues of $X$.\\
Then, there exists $M_{\epsilon, \delta}$ such that
\[
m \geq M_{\epsilon, \delta} \Rightarrow \mathbb{P}_{X \sim \mathcal{P}_m, Y \sim \mathcal{Q}_m} [|\mathcal{L}_k(X) - \mathcal{L}_k(Y)| < \epsilon] \geq 1 - \delta.
\]
\end{proposition}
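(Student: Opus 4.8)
The plan is to read the empirical loss as an integral against the empirical spectral measure and then invoke weak convergence. Set $g(\lambda) := L(f(\lambda), f_k^{*}(\lambda))$; by the continuity hypotheses $g$ is continuous, hence bounded, on the compact interval $K := [a-\epsilon_0, b+\epsilon_0]$. Writing $\mu_m(X) = \frac1m\sum_{i}\delta_{\lambda_i}$, we have $\mathcal{L}_k(X) = \int g\,d\mu_m(X)$ as soon as the spectrum of $X$ lies inside $K$. The conceptual point recalled in \cref{sec:env} -- that the loss curve depends only on the spectrum -- is what legitimizes this representation in the first place.

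The only real obstacle is that $g$ need not be bounded (or even defined) off $K$, so the Portmanteau theorem cannot be applied to $g$ directly. I would fix $\epsilon_1 \in (0,\epsilon_0)$, let $E_m^X := \{S_m \subseteq [a-\epsilon_1,b+\epsilon_1]\}$ (so $\mathbb{P}(E_m^X)\to 1$ by assumption (iii)), and replace $g$ by the clamped function $\tilde g := g\circ \Pi_K$, where $\Pi_K$ is metric projection onto the interval $K$. Then $\tilde g$ is bounded and continuous on all of $\mathbb{R}$ and agrees with $g$ on $K$, so $\mathcal{L}_k(X)=\int\tilde g\,d\mu_m(X)$ on $E_m^X$; moreover $\int\tilde g\,d\mu^{*}=\int g\,d\mu^{*}=:\ell^{*}$ since $\operatorname{supp}\mu^{*}=[a,b]\subseteq K$ by assumption (ii).

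Next I would establish $\mathcal{L}_k(X)\to\ell^{*}$ in probability: for any $\eta>0$,
\[
\mathbb{P}\big(|\mathcal{L}_k(X)-\ell^{*}|>\eta\big)\le\mathbb{P}\big(\big|\textstyle\int\tilde g\,d\mu_m(X)-\ell^{*}\big|>\eta\big)+\mathbb{P}\big((E_m^X)^{c}\big),
\]
the second term vanishing by (iii), and the first vanishing because assumption (i) gives $\mu_m(X)\Rightarrow\mu^{*}$ almost surely, whence $\int\tilde g\,d\mu_m(X)\to\ell^{*}$ almost surely (Portmanteau, $\tilde g$ bounded continuous) and therefore in probability. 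The identical argument applied to $\mathcal{Q}_m$ and $\nu_m(Y)$ yields $\mathcal{L}_k(Y)\to\ell^{*}$ in probability, with the same limit since $\mu^{*}$ is shared.

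Finally, by the triangle inequality $\{|\mathcal{L}_k(X)-\mathcal{L}_k(Y)|\ge\epsilon\}\subseteq\{|\mathcal{L}_k(X)-\ell^{*}|\ge\epsilon/2\}\cup\{|\mathcal{L}_k(Y)-\ell^{*}|\ge\epsilon/2\}$, so by independence of $X\sim\mathcal{P}_m$ and $Y\sim\mathcal{Q}_m$,
\[
\mathbb{P}\big(|\mathcal{L}_k(X)-\mathcal{L}_k(Y)|\ge\epsilon\big)\le\mathbb{P}\big(|\mathcal{L}_k(X)-\ell^{*}|\ge\epsilon/2\big)+\mathbb{P}\big(|\mathcal{L}_k(Y)-\ell^{*}|\ge\epsilon/2\big),
\]
and choosing $M_{\epsilon,\delta}$ so that both right-hand terms fall below $\delta/2$ for $m\ge M_{\epsilon,\delta}$ closes the argument. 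I expect the truncation/clamping step to be where the three hypotheses genuinely interact -- (iii) to confine the empirical measures to $K$, (ii) to identify the limit with $\int g\,d\mu^{*}$, and (i) to drive the actual convergence -- and to be the one place a careful reader would want detail, e.g.\ pinning down the measure-theoretic meaning of ``almost sure weak convergence'' when the matrices $X$ live in spaces of growing dimension.
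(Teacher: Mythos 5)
Your proposal is correct and follows the same high-level strategy as the paper: extend the pointwise loss to a bounded continuous function on $\mathbb{R}$, deduce convergence of its integral from assumption~(i) via the Portmanteau theorem, use assumption~(iii) to argue that the extended integral coincides with $\mathcal{L}_k(\cdot)$ on a high-probability event, and finish with a triangle inequality and union bound around the common limit $\ell^{*}=\int L(f,f_k^{*})\,d\mu^{*}$. The instructive difference lies in the choice of extension. The paper's $\tilde{L}$ equals $L(f(\cdot),f_k^{*}(\cdot))$ only on $[a,b]$ and ramps linearly to zero over $[a-\epsilon',a]$ and $[b,b+\epsilon']$; as a result it disagrees with the true integrand on exactly the region where $S_m$ may place mass, and the paper's subsequent bound $|\int\tilde{L}\,d\mu_m - \int L\,d\mu_m| \le 2\epsilon'A$ implicitly treats $\mu_m$ as if it assigns at most $2\epsilon'$ mass to a $2\epsilon'$-wide boundary strip, which does not follow from the stated hypotheses for an empirical measure. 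Your clamped extension $\tilde{g}=g\circ\Pi_K$ with $K=[a-\epsilon_0,b+\epsilon_0]$ agrees with $g$ on all of $K\supseteq[a-\epsilon_1,b+\epsilon_1]$, so on the event $E_m^X$ the identity $\mathcal{L}_k(X)=\int\tilde{g}\,d\mu_m(X)$ is exact and no measure-of-the-boundary estimate is needed. You also pass correctly from almost sure convergence of $\int\tilde g\,d\mu_m(X)$ to convergence in probability before choosing $M_{\epsilon,\delta}$, whereas the paper's phrasing (``$m\ge M_1$ implies $|\int\tilde L\,d\mu_m - \int\tilde L\,d\mu^*| < \epsilon/4$ with probability 1'') conflates a.s.\ eventual closeness with a probability-one bound at a fixed $m$. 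Two minor notes: the independence of $X$ and $Y$ you invoke before the union bound is unnecessary (the union bound needs no independence), and your closing caveat about formalizing ``almost sure weak convergence'' across growing dimensions is apt -- the paper leaves the underlying probability space implicit as well.
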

Essentially, \cref{prop:guarantee} states that if we find an algorithm $\{f^{*}_k(X,Y,a_k)\}_{k=1}^{N}$ using \cref{MatRL} and it works well for a certain matrix $A$, as the loss value $\mathcal{L}_k(X)$ and $\mathcal{L}_k(Y)$ are similar for sufficiently large $m$, it will work well for any matrix within the same distribution, and also generalize to distributions with the same limiting distribution if $m$ is sufficiently large.

\section{Experiments}
\label{sec:exp}

\subsection{Discovered Algorithms}
\label{sec:Discovered}

\textbf{Different matrix functions} Here we show loss curves of two different matrix functions, $\text{sign}(A)$ and $A^{1/2}$. Results for inverse and $A^{1/3}$ can be found in Appendix E. Wishart denotes $A = \frac{X^\top X}{3d} + \epsilon_{\text{stb}} I$ where $X \in \mathbb{R}^{d/4 \times d}$, $X_{ij} \sim \mathcal{N}(0, 1)$ i.i.d.. $\epsilon_{\text{stb}} = 1e-3$ exists for numerical stability. "Hessian of Quartic" is the indefinite Hessian of a $d$-dimensional quartic $\sum_i z_i^4/4 - z_i^2/4$ evaluated at a random point $z \sim \mathcal{N}(0, \mathbf{I_d})$. The loss function for matrix sign was $\lVert X - \text{sign}(A) \rVert_F / \sqrt{d}$, and the loss function for matrix square root was $\lVert X^2 - A \rVert_F / \lVert A \rVert_F$. For a list of baselines we compare with, see \cref{table:listbaselines} and references therein. A detailed explanation of each baseline can be found in Appendix B.
\begin{table}[H]
\centering
\caption{List of baselines for matrix sign and square root}
\label{table:listbaselines}
\begin{tabular}{
  ll
  S[table-format=3.1(2)]
  S[table-format=3.1(2)]
  S[table-format=1.2]
}
\toprule
Matrix function & List of baselines \\
\midrule
Sign & Newton \cite{higham2008functions}, NS \cite{schulz1933iterative}, ScaledNewton \cite{byers2008new}, ScaledNS \cite{chen2014newton}, Halley\cite{nakatsukasa2010optimizing}\\
Square root& DB\cite{denman1976matrix}, NSV\cite{higham1997stable}(2.6), Scaled DB\cite{higham1997stable}, Visser\cite{higham2008functions}, Newton\cite{hoskins1979faster}\\
\bottomrule
\end{tabular}
\end{table}

The time vs loss curves for different matrix functions can be found in \cref{fig:diffmatfunc}. One thing to notice is that for matrix sign function, the algorithm found by MatRL is strictly better than torch.linalg.eigh, and for matrix square root the algorithm returns a nearly-similar error matrix within much smaller wall clock time. Depending on applications where exact matrix function is not neceessary, using algorithms found by MatRL can be an appealing choice. 
\begin{figure}[h]
\centering
    \begin{subfigure}[t]{0.47\textwidth}
        \includegraphics[width=\linewidth]{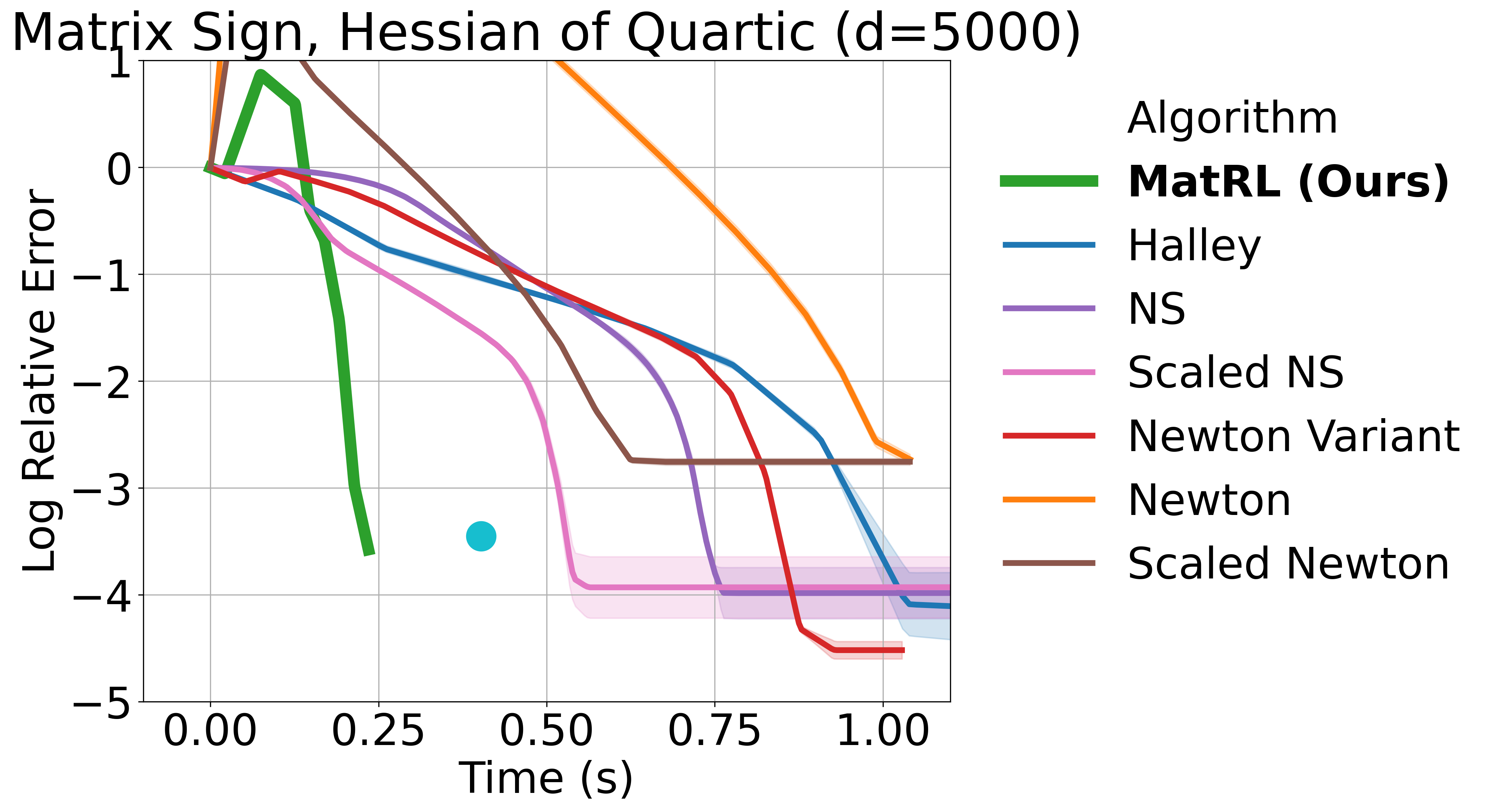}
        \label{MatRL:sign}
    \end{subfigure}
    \hfill
    \begin{subfigure}[t]{0.47\textwidth}
        \includegraphics[width=\linewidth]{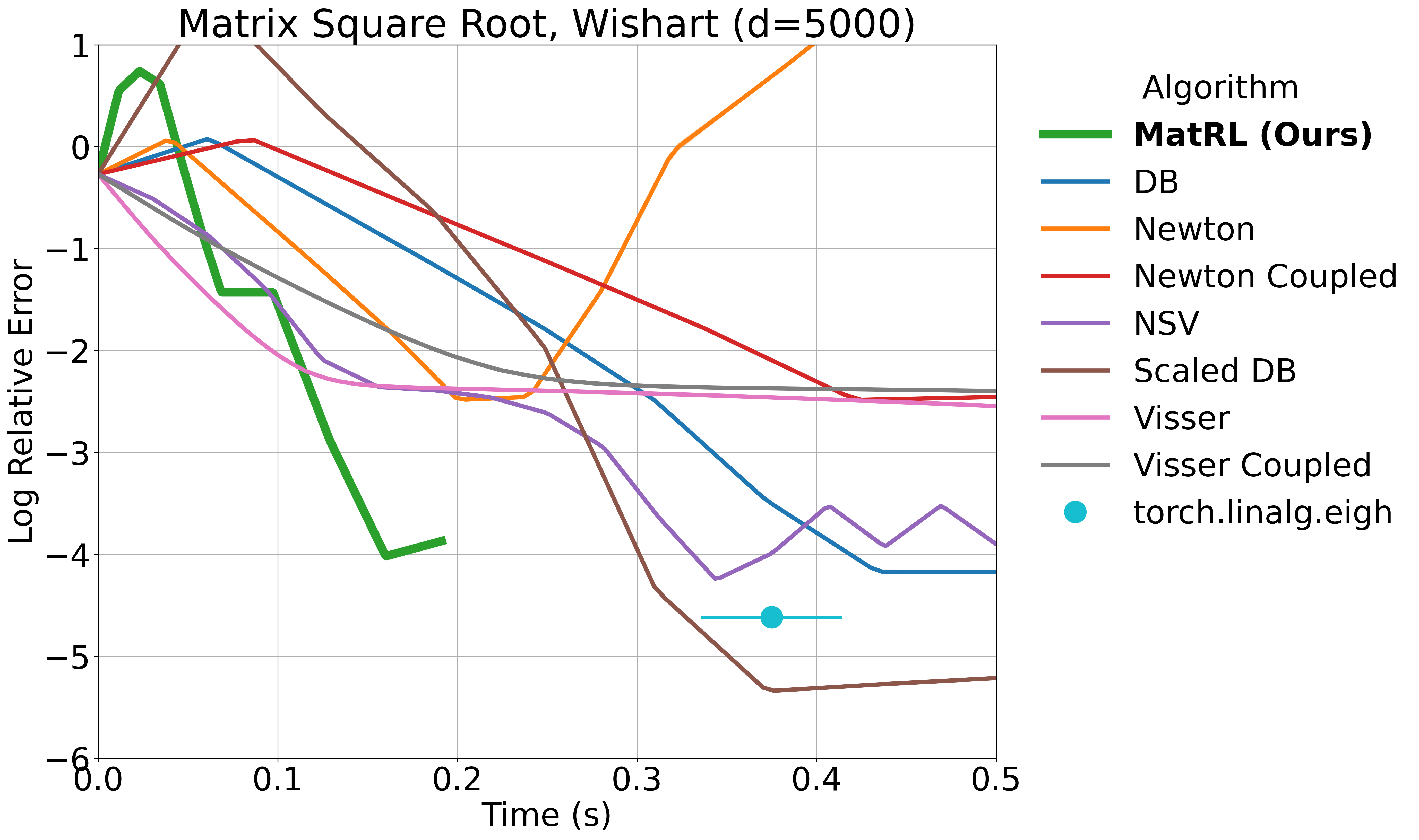}
        \label{MatRL:sqrt}
    \end{subfigure}
    \caption{Time versus loss curve for matrix sign and square root. Each iterative algorithms are plotted as baselines, and the wall-clock time to run torch.linalg.eigh is also reported. For matrix sign we have a clear benefit. For matrix square root the found algorithm sacrifices accuracy for time.}
    \label{fig:diffmatfunc}
\end{figure}
The algorithm in \cref{fig:diffmatfunc}, right is described in \cref{Foundalg-sqrt}. Here we see a tendancy that during the early iterations, the algorithm prefers Visser iteration, whereas for latter steps the algorithm prefers coupled Newton-Schulz iteration. An intuition for such mixed iteration is as follows: when we see the loss curve, we can notice that the Visser iteration converges fast at first, but it quickly stabilizes and becomes very slow. Hence MatRL learns to use the cheap iterations at first to find a good initial start for NewtonSchulz, and run NewtonSchulz from that initial parameter for faster convergence. 
\begin{algorithm}
\caption{Iterative SQRT for Wishart on GPU with $d = 5000$}
\begin{algorithmic}
  \label{Foundalg-sqrt}   
  \STATE \textbf{Input:} $A$
  \STATE Initialize $X_0 = A, Y_0 = I$
  \STATE Set $a \gets [2.092, 1.891, 3.714, 1.063, 0.920, 1], b \gets [1.983, 0.586, 2.385, 0.452, 0.900, 0.946]$, 
  \STATE // rounded off to three digits
  \FOR {$i=1$ to 6}
  \STATE $X_i = a_{i-1}X_{i-1}+b_{i-1}(A - X_{i-1}^2)$
  \ENDFOR
  \STATE $Y_6 = X_6A^{-1}$ 
  \FOR {$i=7$ to 9}
  \STATE $X_i = 0.5(3I -X_{i-1}Y_{i-1})X_{i-1}$
  \STATE $Y_i = 0.5Y_{i-1}(3I - X_{i-1}Y_{i-1})$
  \ENDFOR
  \RETURN $X_9$
\end{algorithmic}
\end{algorithm}

\textbf{Different setups yields different algorithms}
The mixing tendency appears for different matrix functions as well. Preference for a certain iteration over another emerges from two axes, the time it takes for an iteration and how effectively the iteration decreases the loss. Recalling the motivating example in \cref{table:cpugputimes}, different computing environments, e.g. hardware (CPU/GPU), precision (Single/Double), or even the size of the matrix can decide the best algorithm. Here we only demonstrate how precision can change optimal algorithms. A full list of different setups and found algorithms are in Appendix D. 
\begin{algorithm}[H]
\caption{SIGN on GPU ($d=5000$, \texttt{FLOAT})}
\label{SIGN_FLOAT}
\begin{algorithmic}
  \STATE \textbf{Input:} $A$
  \STATE $a \gets [35.123, 0.238]$
  \STATE $b \gets [0.607, 1.109, 1.009, 1.000]$
  \FOR{$i=1$ to 2}
    \STATE $X_i = 0.5(a_{i-1}X_{i-1} + (a_{i-1}X_{i-1})^{-1})$
  \ENDFOR
  \FOR{$i=3$ to 6}
    \STATE $X_i = 1.5(b_{i-3}X_{i-1}) - 0.5(b_{i-3}X_{i-1})^3$
  \ENDFOR
  \RETURN $X_6$
\end{algorithmic}
\end{algorithm}

\vspace{1em}  

\begin{algorithm}[H]
\caption{SIGN on GPU ($d=5000$, \texttt{DOUBLE})}
\label{SIGN_DOUBLE}
\begin{algorithmic}
  \STATE \textbf{Input:} $A$
  \STATE $a \gets [32.158, 0.355, 0.582, 0.932, 0.998, 1]$
  \FOR{$i=1$ to 6}
    \STATE $X_i = 0.5(a_{i-1}X_{i-1} + (a_{i-1}X_{i-1})^{-1})$
  \ENDFOR
  \RETURN $X_6$
\end{algorithmic}
\end{algorithm}

When the precision is double, the relative runtime ratio between inverse and matrix multiplication is not as high as that of single precision. Hence, the model prefers Newton's method more for double precision case, and \cref{SIGN_DOUBLE} only consists of Newton's iteration whereas \cref{SIGN_FLOAT} contains initial Newton's iterations and latter NewtonSchulz iterations.


\subsection{Generalization}
\label{Generalization}
In this section we verify the generalization guarantee that we had in \cref{prop:guarantee}. We test the algorithm learned in \cref{Foundalg-sqrt} to a different matrix distribution with the same limiting spectrum, where each entries of $X$ are sampled i.i.d. from $\text{Unif}[-\sqrt{3}, \sqrt{3}]$ instead of $\mathcal{N}(0, 1)$. We denote the distribution as WishartUnif. In this case, the two spectrum converges to the same spectrum in distribution due to Marchenko-Pastur (\cref{LimitingDistribution-same}). 
\begin{figure}[ht] 
\centering
    \begin{subfigure}[t]{0.47\textwidth}
        \includegraphics[width=\linewidth]{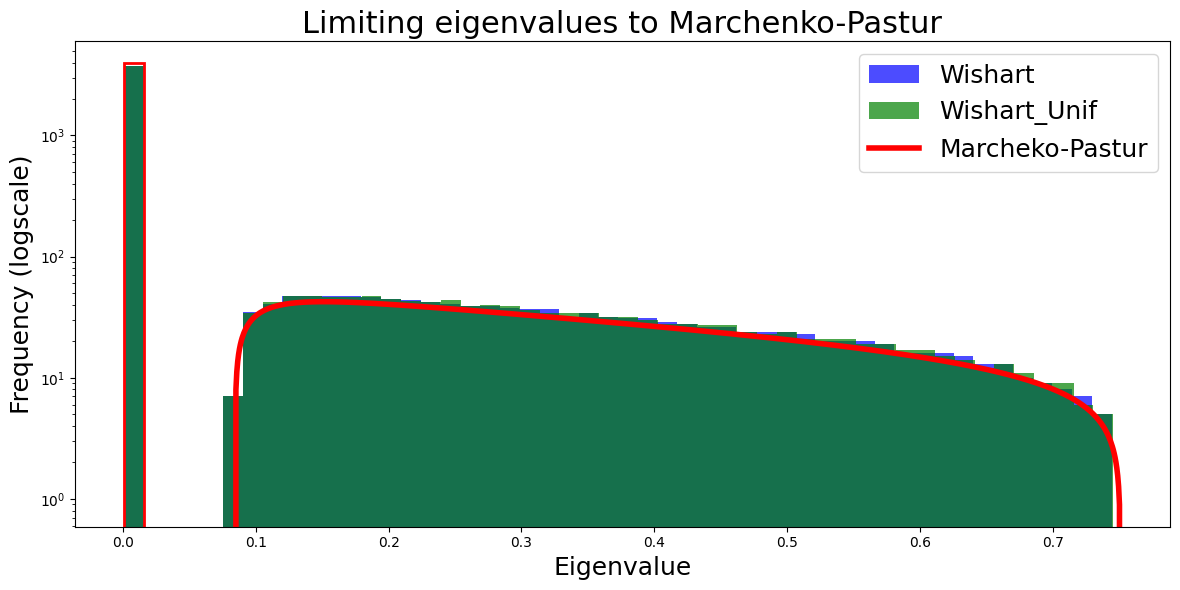}
        \caption{Same limiting eigenvalue distribution}
        \label{LimitingDistribution-same}
    \end{subfigure}
    \hfill
    \begin{subfigure}[t]{0.47\textwidth}
        \includegraphics[width=\linewidth]{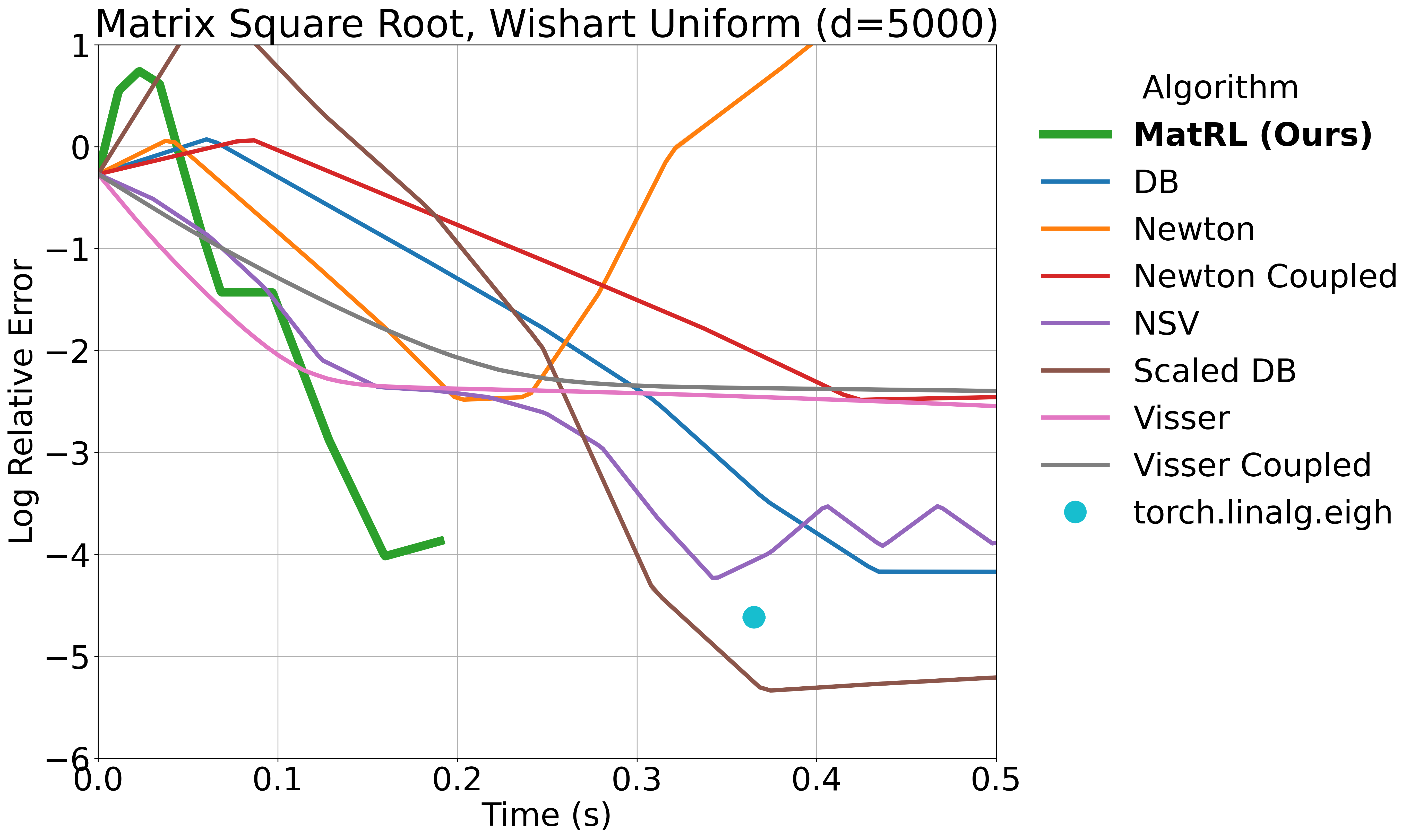}
        \caption{Testing \cref{Foundalg-sqrt}
        on a different distribution}
        \label{Generalizing}
    \end{subfigure}
    \caption{How generalization guarantee in \cref{prop:guarantee} works. Here we have two different random matrix distributions with the same limiting spectrum. Even though they are sampled from different distributions, the limiting spectrum coincide and the learned algorithm generalizes.}
    \label{fig:generalize}
\end{figure}

Due to \cref{prop:guarantee}, we expect the learned algorithm in \cref{Foundalg-sqrt} to work as well for matrices sampled from WishartUnif. \cref{Generalizing} indeed shows that it is true: when we compare the plot in \cref{fig:diffmatfunc} and \cref{Generalizing}, the two curves look almost identical - meaning the loss curve for two tests coincide, and generalization indeed happened.
\subsection{A Real World Example: CIFAR-10 and ZCA Whitening}
A common application of computing matrix square roots, and inverse-square, roots is ZCA whitening of images \cite{bell1997independent}. The ZCA whitening, also known as Mahalanobis whitening, decorrelates (or whitens) data samples using the inverse-square root of an empirical covariance matrix. We apply MatRL to learn an algorithm to compute square roots and inverse-square roots simultaneously using coupled iterations on empirical covariances of CIFAR-10 \cite{krizhevsky2009learning} images. CIFAR-10 dataset is released with an MIT license. Specifically, the random input matrix is $\hat\Sigma = \frac{1}{n} X^T X$  where $X \in \mathbb{R}^{n \times d}$ is a random batch of $n$ CIFAR-10 images. The algorithm discovered by MatRL achieves relative error close to machine precision significantly faster ($\sim$ 1.8x) than baselines in terms of wall-clock time, see \Cref{tab:time_to_eps}.
\newcolumntype{Y}{>{\centering\arraybackslash}X}
\begin{table}[ht]
\centering
\caption{Time to reach machine precision (in seconds) on CIFAR-10 matrix square root.}
\label{tab:time_to_eps}
\begin{tabularx}{\textwidth}{*{6}{Y}}
\toprule
\textbf{MatRL (Ours)} & DB    & Newton & Newton Coupled & NSV   & Scaled DB \\
\midrule
\textbf{1.04 \(\pm\) 0.0318} & 2.30 \(\pm\) 0.0561 & 1.89 \(\pm\) 0.0011 & 3.81 \(\pm\) 0.0004 & 3.58 \(\pm\) 0.0308 & 2.30 \(\pm\) 0.0630 \\
\bottomrule
\end{tabularx}
\end{table}
\vspace{-0.5cm}

\section{Conclusion}
\label{sec:conclusion}
In this paper we propose MatRL, an MCTS-based automated solution to find iterative algorithms for matrix function computation. We showed that we can generate an algorithm specifically tailored for a specific input matrix distribution and compute environment, and the found algorithm is guaranteed to generalize to different matrix distributions with the same limiting spectrum under certain assumptions. We verify our findings with experiments, showing MatRL found algorithms that are faster than existing baselines and competitive to standard torch library.

Our current work has a few limitations. First, the input matrix is restricted to symmetric random matrices, and we only have a guarantee that it will work for random matrix distributions with the same limiting spectrum. Another limitation is that we cannot fully grasp the numerical instability that may occur in matrix operations, because we use spectrum as states for the environment. We believe this is the main reason why some MCTS runs diverge at test stage. At last, our current implementation is on symmetric matrices. We believe it would be clear to extend the setup to nonsymmetric matrices using singuar values. Overcoming these weaknesses is a promising way to improve MatRL. 

Our work has many future directions. One extension of this work could be: can we find novel iterative algorithms to compute $f(A)b$ for a matrix function $f$ and square matrix $A \in \mathbb{R}^{n\times n }$, vector $b \in \mathbb{R}^{n}$. Another interesting direction is not fixing the matrix iterations beforehand, but making the RL agent discover novel iterations that can ensure stability, such as the Denman-Beavers iteration. At last, automatic discovery of optimal parameters of optimization algorithms such as Muon \cite{jordan2024muon} with a different reward (e.g. validation error) could be an interesting direction.

\newpage
\bibliographystyle{plain} 
\bibliography{Styles/neurips_2025.bib}

\newpage
\appendix
\renewcommand{\theproposition}{A.\arabic{proposition}}
\renewcommand{\thelemma}{A.\arabic{lemma}}

\section{Technical Proofs}
\label{sec:proofs}

We first show that the environment can be parametrized by the spectrum.

\begin{proposition}
Let $\{f_k(X, Y, A)\}_{k=1}^{m}$ is a set of congruence invariant diagonal preserving matrix functions that take $X, Y, A$ as input and outputs $(X', Y')$, $\mathcal{L}(X,A)$ be a congruence invariant matrix function that takes $X, A$ as input and outputs a scalar, and $X_0, Y_0, \{X_k\}_{k=1}^{m+1}$ satisfy
\[
(X_0, Y_0) = (A, I) \quad \text{or} \quad (I, A)
\]
and
\[
(X_{k+1}, Y_{k+1}) = f(X_{k},Y_k, A), \quad k \in [m].
\]
At last, write $A = UD_AU^{T}$. Then, the following properties hold:\\
i) $X_{k}, Y_k \sim A$ for all $k \in [m+1]$.\\
ii) Write $X_k = UP_kU^{T}$, $Y_k  = UQ_kU^{T}$. Then, $P_{k+1}$, $Q_{k+1}$ depends only on $P_k$, $Q_k$ and $D_A$.\\
iii) The loss $\mathcal{L}(X_k, A)$ only depends on $P_k, D_A$.
\end{proposition}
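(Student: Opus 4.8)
The plan is to prove all three claims simultaneously by induction on $k$, carrying the slightly stronger invariant that $X_k$ and $Y_k$ are \emph{simultaneously} diagonalized by the fixed orthogonal matrix $U$ appearing in $A = UD_AU^T$; that is, $X_k = UP_kU^T$ and $Y_k = UQ_kU^T$ for some diagonal $P_k, Q_k$. This invariant immediately yields (i) (all of $X_k, Y_k, A$ are then orthogonally similar, via the same $U$, to diagonal matrices), and unwinding the recursion gives (ii); claim (iii) then follows in one line from congruence invariance of $\mathcal{L}$ once the invariant is in hand.

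For the base case $k=0$ I would just observe that $(X_0, Y_0)$ is $(A,I)$ or $(I,A)$, and since $A = UD_AU^T$ and $I = UIU^T$, in either case $X_0 = UP_0U^T$, $Y_0 = UQ_0U^T$ with $P_0, Q_0 \in \{D_A, I\}$ diagonal. For the inductive step, assume $X_k = UP_kU^T$, $Y_k = UQ_kU^T$ with $P_k, Q_k$ diagonal, and let $f_j$ be the iteration used at step $k$ (any of the $m$ choices — the argument is identical for each). Applying congruence invariance of $f_j$ with the orthogonal matrix $Q := U$, treating $A = UD_AU^T$ as the third argument, gives
\[
f_j(X_k, Y_k, A) = f_j\!\left(UP_kU^T,\, UQ_kU^T,\, UD_AU^T\right) = U\, f_j(P_k, Q_k, D_A)\, U^T .
\]
Since $P_k, Q_k, D_A$ are all diagonal and $f_j$ is diagonal preserving, $f_j(P_k, Q_k, D_A) =: (P_{k+1}, Q_{k+1})$ is a pair of diagonal matrices; hence $X_{k+1} = UP_{k+1}U^T$, $Y_{k+1} = UQ_{k+1}U^T$, which restores the invariant and proves (i). Moreover the map $(P_k, Q_k, D_A) \mapsto (P_{k+1}, Q_{k+1})$ is precisely $f_j$ restricted to diagonal inputs, so $P_{k+1}, Q_{k+1}$ depend only on $P_k, Q_k, D_A$, which is (ii). Finally, for (iii), congruence invariance of the scalar-valued $\mathcal{L}$ (its output is one-dimensional, so the conjugating action on the output is trivial and invariance reads $\mathcal{L}(QXQ^T, QAQ^T) = \mathcal{L}(X,A)$) gives $\mathcal{L}(X_k, A) = \mathcal{L}(UP_kU^T, UD_AU^T) = \mathcal{L}(P_k, D_A)$, so it depends only on $P_k$ and $D_A$.

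The argument is a routine induction, and the only points I expect to need care with are bookkeeping: invoking congruence invariance with three matrix arguments (the auxiliary variable $Y$ and $A$ both included); checking that \emph{all three} of $P_k, Q_k, D_A$ are diagonal before using the diagonal-preserving property; and correctly reading off what congruence invariance means for the scalar output of $\mathcal{L}$. The symmetry of $A$ is what makes an orthogonal diagonalization available, so that $U^{-1} = U^T$ and congruence by $U$ coincides with similarity; non-uniqueness of $U$ when $A$ has repeated eigenvalues is harmless, since we fix one such $U$ at the outset and the induction never needs a canonical choice.
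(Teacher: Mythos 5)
Your proof is correct and takes essentially the same inductive route as the paper's: diagonalize $A = UD_AU^T$, use congruence invariance to pull $U$ through $f_j$, then use diagonal preservation to keep the spectra diagonal, with (ii) and (iii) falling out of the same computation. The one genuine difference is that you explicitly strengthen the inductive invariant to \emph{simultaneous diagonalization by the fixed $U$}, which is in fact what the induction needs but which the paper only carries implicitly---the paper's stated hypothesis ``$X_t, Y_t$ similar to $A$'' does not by itself justify writing $X_t = UD_1U^T$ with the same $U$, so your version is slightly more careful without being a different argument.
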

\begin{proof}
i) We prove by induction. \\
If $k = 0$, we know that $X_0, Y_0 = A, I$ or $I, A$ hence they are similar with $A$.\\
Say $X_t, Y_t$ are similar with $A$. Then $X_t = UD_1U^{T}, Y_t = UD_2U^{T}$ for diagonal $D_1, D_2$. Now we can see that 
\[
f(X_t, Y_t, A) = f(UD_1U^{T}, UD_2U^{T}, UD_AU^{T}) = Uf(D_1, D_2, D_A)U^{T}  = (UD_1'U^{T}, UD_2'U^{T}),
\]
for some diagonal $D_1', D_2'$. The second equality comes from congruent invariance, and the third equality follows from diagonal preservence. Hence, $X_{t+1} = UD_1'U^{T}$ and $X_{t+1}$ and $A$ are similar, $Y_{t+1} = UD_2'U^{T}$ and $Y_{t+1}$ and $A$ are also similar.\\
ii) From the proof of i) we know that $f(D_1, D_2, D_A) = (D_1', D_2')$, where $D_1, D_2, D_A, D_1', D_2'$ are the spectrum of $X_{t}, Y_{t}, A, X_{t+1}, Y_{t+1}$, respectively.\\
iii) We know $X_k$ and $A$ are similar. Let $X_k = UP_kU^{T}$ and $A = UD_AU^{T}$. $\mathcal{L}(UP_kU^{T}, UD_AU^{T}) = \mathcal{L}(P_k, D_A)$ by congruent invariance.
\end{proof}

Next we show that all iterations that we deal with in the paper is congruent invariant diagonal preserving. See \cref{table:baselines} and \cref{table:actionlist} for the iterations of interest. 
\begin{lemma}
Assume $f(X_1, X_2, \cdots X_k)$, $g(X_1, X_2, \cdots X_k)$ are congruent invariant diagonal preserving. Then, $f+g$, $fg$, $f^{-1}$ are all congruent invariant diagonal preserving.
\end{lemma}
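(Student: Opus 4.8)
The plan is to verify the three closure properties directly from the definitions, since congruence invariance and diagonal preservation are each a pointwise algebraic identity that interacts cleanly with addition, multiplication, and inversion. Throughout I will write $\mathbf{X} = (X_1,\dots,X_k)$ and $Q\mathbf{X}Q^T = (QX_1Q^T,\dots,QX_kQ^T)$, and I will use the hypotheses $f(Q\mathbf{X}Q^T) = Qf(\mathbf{X})Q^T$ and $g(Q\mathbf{X}Q^T) = Qg(\mathbf{X})Q^T$ together with $Q^TQ = QQ^T = I$ (equivalently $Q^{-1} = Q^T$) for every orthogonal $Q$.

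For the sum: congruence invariance follows from $(f+g)(Q\mathbf{X}Q^T) = f(Q\mathbf{X}Q^T) + g(Q\mathbf{X}Q^T) = Qf(\mathbf{X})Q^T + Qg(\mathbf{X})Q^T = Q\big(f(\mathbf{X})+g(\mathbf{X})\big)Q^T$, and diagonal preservation is immediate since a sum of diagonal matrices is diagonal. For the product, the key step is the cancellation $Q^TQ = I$ in the middle: $(fg)(Q\mathbf{X}Q^T) = f(Q\mathbf{X}Q^T)\,g(Q\mathbf{X}Q^T) = Qf(\mathbf{X})Q^TQg(\mathbf{X})Q^T = Q\big(f(\mathbf{X})g(\mathbf{X})\big)Q^T$, and a product of diagonal matrices is again diagonal. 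For the inverse (read on the set of inputs where $f(\mathbf{X})$ is invertible), I will use $(QMQ^T)^{-1} = (Q^T)^{-1}M^{-1}Q^{-1} = QM^{-1}Q^T$, giving $f^{-1}(Q\mathbf{X}Q^T) = \big(f(Q\mathbf{X}Q^T)\big)^{-1} = \big(Qf(\mathbf{X})Q^T\big)^{-1} = Qf(\mathbf{X})^{-1}Q^T$, while diagonal preservation holds because the inverse of an invertible diagonal matrix is diagonal.

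Combined with the trivial observation that constant maps $\mathbf{X} \mapsto cI$ are congruence invariant and diagonal preserving, this lemma shows the class is closed under arbitrary polynomial and rational combinations, which is precisely what is needed: every iteration in \cref{table:baselines} and \cref{table:actionlist} is assembled from $X$, $Y$, $A$, the identity, scalar coefficients, matrix products, and matrix inverses, hence is congruence invariant and diagonal preserving, so the spectral parametrization of the environment in the preceding proposition applies to all of them.

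I do not expect a genuine obstacle here; the only points deserving a word of care are (i) the inverse claim is vacuous wherever $f(\mathbf{X})$ is singular, so it should be stated as holding wherever both sides are defined, and (ii) one must have $m = n$ for products and inverses of the outputs to make sense, which holds for all coupled and uncoupled iterations used in the paper.
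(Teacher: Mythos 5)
Your argument is correct and follows essentially the same route as the paper's: sum, product, and inverse of diagonal matrices are diagonal, and the three congruence-invariance identities are verified by factoring out $Q$ and $Q^T$, with the cancellation $Q^TQ = I$ in the product case and $(QMQ^T)^{-1} = QM^{-1}Q^T$ in the inverse case. The only addition is your explicit remark about the domain where $f(\mathbf{X})$ is invertible and the matching output dimensions, which is a reasonable clarification the paper leaves implicit.
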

\begin{proof}
Diagonal preserving is simple, as sum, multiple, inverse of diagonal matrices are diagonal. Let's show congruence invariance. 
\[
f(QX_1Q^{T}, \cdots QX_kQ^{T}) + g(QX_1Q^{T}, \cdots QX_kQ^{T}) = Qf(X_1, \cdots X_k)Q^{T} + Qg(X_1,\cdots X_k)Q^{T}
\]
and
\[
Qf(X_1, \cdots X_k)Q^{T} + Qg(X_1,\cdots X_k)Q^{T} = Q(f(X_1, \cdots X_k)+g(X_1,\cdots X_k))Q^{T}.
\]
\[
f(QX_1Q^{T}, \cdots QX_kQ^{T})g(QX_1Q^{T}, \cdots QX_kQ^{T}) = Qf(X_1, \cdots X_k)Q^{T}Qg(X_1,\cdots X_k)Q^{T}
\]
and
\[
Qf(X_1, \cdots X_k)Q^{T}Qg(X_1,\cdots X_k)Q^{T} = Q(f(X_1, \cdots X_k)g(X_1,\cdots X_k))Q^{T}.
\]
\[
f(QX_1Q^{T}, \cdots ,QX_kQ^{T})^{-1} = (Qf(X_1, \cdots X_k)Q^{T})^{-1} = Qf(X_1, \cdots X_k)^{-1}Q^{T}.
\]
\end{proof}
\begin{proposition}
\label{prop:cgdi_interest}
Say $f(X_1, X_2, \cdots X_k)$ is a rational function of $X_1, \cdots X_k$, i.e. $f(X_1, X_2, \cdots X_k) = P(X_1, \cdots X_k)Q(X_1, \cdots X_k)^{-1}$ for polynomials $P, Q$. Then $f$ is congruent invariant diagonal preserving.
\end{proposition}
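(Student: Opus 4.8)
The plan is to reduce everything to the preceding lemma (closure of the congruence invariant diagonal preserving class under sums, products and inverses) by a structural induction on how $P$ and $Q$ are built up from their matrix variables. The lemma already supplies the inductive steps, so the only thing missing are the two base cases. First I would check that the constant maps $(X_1,\dots,X_k)\mapsto cI$ for a scalar $c$ and the coordinate projections $\pi_i:(X_1,\dots,X_k)\mapsto X_i$ are congruence invariant diagonal preserving: for the constant map, $c(QIQ^{T}) = cI = Q(cI)Q^{T}$ because $Q$ is orthogonal, and $cI$ is diagonal; for the projection, $QX_iQ^{T} = Q\,\pi_i(X_1,\dots,X_k)\,Q^{T}$ is immediate, and $X_i$ is diagonal whenever all the inputs are diagonal.

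Next I would observe that every noncommutative polynomial $P(X_1,\dots,X_k)$ is a finite sum of monomials, each of the form $c\,X_{i_1}X_{i_2}\cdots X_{i_\ell}$ with $c$ a scalar. Each such monomial is obtained from the base cases above by finitely many matrix products, viewing the scalar multiple $cM$ as the product $(cI)M$ of the constant map with the rest, so the product clause of the lemma shows every monomial is congruence invariant diagonal preserving, and the sum clause then shows $P$ is; the identical argument applies to $Q$.

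Finally I would invoke the lemma twice more. Since $Q$ is congruence invariant diagonal preserving, so is $Q^{-1}$ — here one should be explicit that this is only claimed on the set where $Q(X_1,\dots,X_k)$ is invertible, which is exactly the natural domain of $f$, and that the congruence $X\mapsto QXQ^{T}$ preserves invertibility, so that domain is itself congruence invariant. Then, since $P$ and $Q^{-1}$ are both congruence invariant diagonal preserving, so is their product $f=PQ^{-1}$, which is the claim. The argument is essentially bookkeeping; the only points that genuinely need attention — and they are the ``main obstacle'' only in the sense of being easy to overlook — are supplying the base cases (constants and projections) that the lemma does not itself cover, and being explicit that ``polynomial'' means a noncommutative polynomial in the matrix variables and that the statement is made on the domain where the denominator $Q$ is invertible.
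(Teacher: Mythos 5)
Your proof is correct and follows the same route as the paper: reduce to the preceding closure lemma (sums, products, inverses of congruence invariant diagonal preserving maps). The paper's own proof is a one-liner that simply asserts $P$, $Q$, and hence $PQ^{-1}$ are congruence invariant diagonal preserving by the lemma; you fill in the details it leaves implicit — the base cases (constants and coordinate projections), the monomial/sum decomposition of a noncommutative polynomial, and the observation that the domain of invertibility of $Q$ is itself congruence invariant — all of which are correct and genuinely strengthen the argument.
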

\begin{proof}
We know that $P, Q,$ and hence $PQ^{-1}$ is congruent invariant diagonal preserving from lemma 1.
\end{proof}
As all iterations in \cref{table:actionlist} or \cref{table:baselines} are rational functions, the iterations of interest are congruent invariant diagonal preserving.

At last we show \cref{prop:guarantee}: our found algorithm will generalize to the matrices drawn from the distribution with the same limiting eigenvalue spectrum.

\begin{proposition}
(Generalization of the discovered algorithm, \cref{prop:guarantee} of the main paper) Say we have a sequence of symmetric random matrix distributions $\mathcal{P}_m, \mathcal{Q}_m$ defined in $\mathbb{R}^{m \times m}$, and denote random matrices sampled from $\mathcal{P}_m$, $\mathcal{Q}_m$ as $X, Y$. Let the empirical eigenvalue value distribution of $X \sim \mathcal{P}_m, Y \sim \mathcal{Q}_m$ be $\mu_m(X), 
 \nu_m(Y)$, and their support be $S_m, S'_m$, respectively. Now, suppose\\
(i) (Identical limiting distribution) 
\[
\mathbb{P}(\mu_m(X) \Rightarrow \mu^{*}) = \mathbb{P}(\nu_m(Y) \Rightarrow \mu^{*}) = 1,
\]
i.e. both $\mu_m(X)$ and $\nu_m(Y)$ converges weakly to a common distribution $\mu^{*}$ with probability 1.\\
(ii) (Interval support of the limiting distribution) The support of $\mu^{*}$ is an interval $[a, b]$. \\
(iii) (Convergence of support) We have
\[
\lim_{m \rightarrow \infty} \mathbb{P}(S_m \subseteq [a-\epsilon, b+\epsilon]) = \lim_{m \rightarrow \infty} \mathbb{P}(S'_m \subseteq [a-\epsilon, b+\epsilon]) = 1,
\]
for all $\epsilon > 0$.\\
With the assumptions above, let $f$ be the matrix function we would like to compute, $f^{*}_k$ the step $k$ transformation of eigenvalues of the algorithm found by \cref{MatRL}, and $\mathcal{L}$ be the loss. Assume $f, f_k^{*}, L$ are continuous in $[a - \epsilon_0,b + \epsilon_0]$ for some $\epsilon_0 > 0$. Write the empirical loss of the random matrix $X$ as
\[
\mathcal{L}_k(X) = \frac{1}{m} \sum_{i=1}^{m} L(f(\lambda_i), f_k^{*}(\lambda_i)),
\]
where $\lambda_i$ are eigenvalues of $X$.\\
Then, there exists $M_{\epsilon, \delta}$ such that
\[
m \geq M_{\epsilon, \delta} \Rightarrow \mathbb{P}_{X \sim \mathcal{P}_m, Y \sim \mathcal{Q}_m} [|\mathcal{L}_k(X) - \mathcal{L}_k(Y)| < \epsilon] \geq 1 - \delta.
\]
\end{proposition}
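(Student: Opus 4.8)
The plan is to rewrite the empirical loss as the integral of a single fixed scalar test function against the empirical spectral measure, and then combine the weak‑convergence hypothesis (i) with the support‑containment hypothesis (iii) to control the bulk and the tail separately. Throughout, assumptions (i) and (iii) are read as statements about a sequence $X = X_m$ (resp.\ $Y = Y_m$) living on a common probability space, as the notation $\mathbb{P}(\mu_m(X)\Rightarrow\mu^{*})=1$ presupposes.

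\textbf{Step 1 (a scalar test function and its extension).} Define $g(\lambda) := L(f(\lambda), f^{*}_k(\lambda))$. Since $f$, $f^{*}_k$ and $L$ are continuous on $[a-\epsilon_0, b+\epsilon_0]$, the composition $g$ is continuous, hence bounded, on this compact interval. Extend $g$ to a bounded continuous function $\tilde g:\mathbb{R}\to\mathbb{R}$ (for instance, $\tilde g \equiv g$ on $[a-\epsilon_0,b+\epsilon_0]$ and constant, equal to the corresponding endpoint value, outside it). Writing $\mu_m(X)$ for the empirical spectral measure of $X$, we have $\mathcal{L}_k(X) = \int g\, d\mu_m(X)$; moreover, on the event $E_m^X := \{\,S_m \subseteq [a-\epsilon_0, b+\epsilon_0]\,\}$ all eigenvalues of $X$ lie where $\tilde g$ and $g$ agree, so $\mathcal{L}_k(X) = \int \tilde g\, d\mu_m(X)$ there. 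The analogous identities hold for $Y$ with $\nu_m(Y)$ and $E_m^Y := \{\,S'_m \subseteq [a-\epsilon_0,b+\epsilon_0]\,\}$.

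\textbf{Step 2 (pass to the common deterministic limit).} Because $\tilde g$ is bounded and continuous, assumption (i) yields $\int \tilde g\, d\mu_m(X) \to c$ and $\int \tilde g\, d\nu_m(Y) \to c$ almost surely, where $c := \int \tilde g\, d\mu^{*}$ is a deterministic constant; by assumption (ii) we have $\mathrm{supp}(\mu^{*}) = [a,b] \subseteq [a-\epsilon_0,b+\epsilon_0]$, so in fact $c = \int g\, d\mu^{*}$. Almost sure convergence implies convergence in probability, so there is $M_1 = M_1(\epsilon,\delta)$ such that for all $m \ge M_1$,
\[
\mathbb{P}\left( \, |\, \int \tilde g\, d\mu_m(X) - c \,| < \epsilon/2 \, \right) \ge 1 - \delta/4, \qquad \mathbb{P}\left( \, |\, \int \tilde g\, d\nu_m(Y) - c \,| < \epsilon/2 \, \right) \ge 1 - \delta/4 .
\]
By assumption (iii) with the choice $\epsilon \leftarrow \epsilon_0$, there is $M_2$ with $\mathbb{P}(E_m^X) \ge 1-\delta/4$ and $\mathbb{P}(E_m^Y) \ge 1-\delta/4$ for all $m \ge M_2$. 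Set $M_{\epsilon,\delta} := \max(M_1, M_2)$. For $m \ge M_{\epsilon,\delta}$, a union bound over the four events above leaves probability at least $1-\delta$ for their intersection, on which $\mathcal{L}_k(X) = \int \tilde g\, d\mu_m(X)$, $\mathcal{L}_k(Y) = \int \tilde g\, d\nu_m(Y)$, and the triangle inequality gives
\[
|\mathcal{L}_k(X) - \mathcal{L}_k(Y)| \le |\, \int \tilde g\, d\mu_m(X) - c \,| + |\, c - \int \tilde g\, d\nu_m(Y) \,| < \epsilon/2 + \epsilon/2 = \epsilon ,
\]
which is the claim.

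\textbf{Main obstacle.} The only genuinely nontrivial point is that weak convergence of $\mu_m(X)$ to $\mu^{*}$ by itself does \emph{not} control $\int g\, d\mu_m(X)$: the function $g$ is only assumed continuous on a bounded neighbourhood of $[a,b]$, and a small number of eigenvalues could a priori escape that neighbourhood and be sent anywhere by the (possibly unbounded) behaviour of $L\circ(f,f^{*}_k)$ outside it. Assumption (iii) is exactly what excludes this, and replacing $g$ by a globally bounded continuous extension $\tilde g$ is the device that lets the problem be fed into the definition of weak convergence; everything else is routine $\epsilon$–$\delta$ bookkeeping with a union bound.
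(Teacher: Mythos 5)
Your proof is correct, and it follows the same high-level strategy as the paper: pass through the deterministic limit $c=\int L(f,f_k^*)\,d\mu^*$, replace $g=L(f(\cdot),f_k^*(\cdot))$ by a bounded continuous extension so that assumption (i) can be invoked, use assumption (iii) to control eigenvalues escaping the region where $g$ is defined, and finish with a union bound and the triangle inequality. The one genuine difference is the choice of extension, and it matters. The paper's $\tilde L$ agrees with $g$ only on $[a,b]$ and decays linearly to zero on $[a-\epsilon',a]\cup[b,b+\epsilon']$; this forces the paper to bound the residual $\bigl|\int\tilde L\,d\mu_m - \int g\,d\mu_m\bigr|$ on the good event, and the bound it writes, $2\epsilon'A$, does not actually follow: the empirical spectral measure could concentrate an $\Omega(1)$ fraction of eigenvalues in the thin collar $[a-\epsilon',a]\cup[b,b+\epsilon']$ while still having its support contained in $[a-\epsilon',b+\epsilon']$, so the correct bound is $A\cdot\mu_m\bigl([a-\epsilon',a]\cup[b,b+\epsilon']\bigr)$, which is not controlled by $2\epsilon'$. (The paper's choice $\epsilon'=\max\{\epsilon_0,\epsilon/8A\}$ also appears to be a typo for $\min$, since one needs $\epsilon'\le\epsilon_0$ for continuity and $\epsilon'$ small for the claimed bound.) Your extension agrees with $g$ on all of $[a-\epsilon_0,b+\epsilon_0]$, so on the event $\{S_m\subseteq[a-\epsilon_0,b+\epsilon_0]\}$ the identity $\mathcal L_k(X)=\int\tilde g\,d\mu_m(X)$ is exact, the residual term vanishes identically, and the issue above never arises. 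In short: same overall plan, but your version of the extension closes a small hole in the paper's estimate of the collar term.
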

\begin{proof}
We write 
\[
\mathcal{L}^{*} = \int L(f(\sigma), f_k(\sigma)) d\mu^{*}(\sigma).
\]
We would like to show that for sufficiently large $m$, $|\mathcal{L}_{k}(X) - \mathcal{L}^{*}| < \epsilon/2$ with high probability. As $\mathbb{P}(\mu_m(X) \Rightarrow \mu^{*}) = 1$, we know that with probability 1,
\[
\lim_{m \rightarrow \infty} \int f d\mu_m(X) = \int fd\mu^{*}
\]
for all continuous bounded $f$. We shall extend $L(f(x), f_k(x))$ in a way that it is continuous bounded in $\mathbb{R}$ and the ``difference" is small.\\
First, we know that $L(f(x), f_k(x))$ is continuous in $[a-\epsilon_0, b+\epsilon_0]$. Say
\[
A = \max_{x \in [a - \epsilon_0 , b + \epsilon_0], y \in [a, b]} |L((f(x), f_k(x))| + |L((f(y), f_k(y))|. 
\]
Now, choose $\epsilon' = \max\{\epsilon_0, \epsilon/8A\}$ (when $A = 0$ we just have $\epsilon' = \epsilon_0$). With the chosen $\epsilon'$, define $\tilde{L}$ as
\[
\tilde{L}(x) = \begin{cases}
    L(f(x), f_k(x)) \quad if \quad x \in [a, b]\\
    (x - a + \epsilon') \frac{L(f(a), f_k(a))}{\epsilon'} \quad if \quad x \in [a - \epsilon', a]\\
    (-x + b + \epsilon') \frac{L(f(b), f_k(b))}{\epsilon'} \quad if \quad x \in [b, b + \epsilon']\\
    0 \quad if \quad x \in (-\infty, a-\epsilon'], [b+\epsilon', \infty),
\end{cases}
\]
which is a bounded continuous function in $\mathbb{R}$. 
At last, choose $M_1$ sufficiently large so that $m \geq M_1$ implies
\[
|\int \tilde{L} d\mu_m(X) - \int \tilde{L}d\mu^{*}| < \epsilon/4
\]
with probability 1 and
\[
\mathbb{P}(S_m \subseteq [a - \epsilon', b + \epsilon']) \geq 1 - \delta/2.
\]
Such $m$ exists because of assumptions (i) and (iii). Now, we know that
\[
\mathcal{L}_k(X) = \int L(f(x),f_k(x)) \mu_m(X).
\]
Moreover, the function $|\tilde{L}(x) - L(f(x),f_k(x))| \leq A$ for $x \in S_m$. This is because \[|\tilde{L}(x)|\leq \max_{x \in [a, b]} |L(f(x), f_k(x))|, \quad S_m \subseteq [a - \epsilon' , b + \epsilon'] \subseteq [a - \epsilon_0, b + \epsilon_0].\]
As $\tilde{L}(x) - L(f(x),f_k(x)) = 0$ for $x \in [a,b]$, the value
\[
|\int \tilde{L}d\mu_m(X) - \int L(f(x),f_k(x))d\mu_m(X)| \leq 2\epsilon'A \leq \epsilon/4
\]
with probability at least $1 - \delta/2$. Hence, when $m \geq M_1$, 
\[
|\mathcal{L}_k(X) - \int \tilde{L}d\mu^{*}| = |\mathcal{L}_k(X) - \mathcal{L}^{*}| < \epsilon/2
\]
with probability at least $1 - \delta/2$. We can do the same argument for $Y$ to find $M_2$. Take $M_{\epsilon, \delta} = \max\{M_1, M_2\}$. Using union bound, we can see the probability that both $|\mathcal{L}_k(X) - \mathcal{L}^{*}| < \epsilon/2$ and $|\mathcal{L}_k(Y) - \mathcal{L}^{*}| < \epsilon/2$ happens is at least $1 - \delta$. Hence, $\mathbb{P}_{X \sim \mathcal{P}_m, Y \sim \mathcal{Q}_m} [|\mathcal{L}_k(X) - \mathcal{L}_k(Y)| < \epsilon] \geq 1 - \delta$.
\end{proof}
\newpage

\section{List of Used Matrix Iterations}
\label{sec:matrix functions and iterations}

We first present a table that shows different types of baseline algorithms used in the paper with references. This table is a superset of \cref{table:listbaselines}. 
\begin{table}[H]
\centering
\caption{List of baselines}
\label{table:listbaselines_full}
\begin{tabular}{
  ll
  S[table-format=3.1(2)]
  S[table-format=3.1(2)]
  S[table-format=1.2]
}
\toprule
Matrix function & List of baselines \\
\midrule
Inverse & NS \cite{pan1991improved}, Chebyshev \cite{li2011chebyshev}\\
Sign & Newton \cite{higham2008functions}, NS \cite{schulz1933iterative}, ScaledNewton \cite{byers2008new}, ScaledNS \cite{chen2014newton}, Halley\cite{nakatsukasa2010optimizing}\\
Square root& DB\cite{denman1976matrix}, NSV\cite{higham1997stable}(2.6), Scaled DB\cite{higham1997stable}, Visser\cite{higham2008functions}, Newton\cite{hoskins1979faster}\\
1/3 - root & Iannazzo \cite{iannazzo2006newton}, Visser \cite{higham2008functions}, Newton \cite{iannazzo2006newton}(1.2)\\
\bottomrule
\end{tabular}
\end{table}
\subsection{Iterative methods associated with inverse}
We have two different baselines for inverse. One is Newton's method proposed by Schulz, which is the iteration
\[
\textbf{\textrm{(InvNewton)}} \quad \quad X_{k+1} = 2X_k - X_kAX_k.
\]
For an appropriate initialization, the norm $\lVert I - AX_k \rVert_2$ will converge quadratically to zero. This is because we can write 
\[
I - AX_{k+1} = I - 2AX_k + AX_kAX_k = (I - AX_k)^2.
\]
Another baseline is applying Chebyshev's iteration to the function $X^{-1} - A$. We have
\[
\textbf{\textrm{(InvChebyshev)}} \quad \quad X_{k+1} = 3X_k - 3X_kAX_k + X_kAX_kAX_k.
\]
With similar logic we can obtain $I - AX_{k+1} = (I - AX_k)^3$. Hence at each iteration the error decreases cubically. The drawback is that Chebyshev's method needs at least three matrix-matrix multipications each iteration. 
\subsection{Iterative methods associated with sign}
The simplest method to compute matrix sign is Newton's method, where the iteration is given as
\[
\textbf{\textrm{(SignNewton)}} \quad\quad X_{k+1} = \frac{1}{2}(X_k + X_k^{-1}).
\]
The NewtonSchulz variant avoids computing inverse by using the iteration
\[
X_{k+1} = \frac{1}{2}(3X_k - X_kX_k^{T}X_k),
\]
hence for symmetric matrices
\[
\textbf{\textrm{(SignNewtonSchulz)}} \quad\quad X_{k+1} = \frac{1}{2}(3X_k - X_k^3).
\]
Newton's method has scaled variants, where we do
\[
\textbf{\textrm{(SignScaledNewton)}} \quad\quad X_{k+1} = \frac{1}{2}(\mu_kX_k + (\mu_kX_k)^{-1}),
\]
for specific $\mu_k$. Our baseline is the one proposed in \cite{byers2008new}. Here $\mu_k$ is defined as the following: we let $a, b$ be constants that satisfy $a \leq \sigma_n \leq \sigma_1 \leq b$ for the singular values of $A$. Then 
\[
\mu_0 = \frac{1}{\sqrt{ab}},\quad \mu_1 = \sqrt{\frac{2}{\sqrt{a/b}+\sqrt{b/a}}},\quad \mu_k = \sqrt{\frac{2}{\mu_{k-1}+\mu_{k-1}^{-1}}}, k \geq 2.
\]
$a$ and $b$ can be obtained by computing $\lVert A \rvert_2$ and $\lVert A^{-1} \rvert_2^{-1}$.
NewtonSchulz method may also have variants: a recent variant in \cite{chen2014newton} scales each $X_k$ as
\[
\textbf{\textrm{(SignScaledNewtonSchulz)}} \quad\quad X_{k+1} = \frac{3}{2}\rho_kX_k - \frac{1}{2}(\rho_kX_k)^3,
\]
where $X_0$ = $A / \lambda_{|max|}(A)$, $x_0 = \lambda_{|min|}(A)/\lambda_{|max|}(A)$ and
\[
\rho_k = \sqrt{\frac{3}{1+x_0+x_0^2}}, \quad x_{k+1} = \frac{1}{2}\rho_kx_k(3 - \rho_k^2x_k^2).
\]
Halley's method uses a rational approximation of sign function to compute the matrix sign. The iteration is written as
\[
\textbf{\textrm{(SignHalley)}} \quad\quad X_{k+1} = X_k(a_kI+b_kX_k^2)(I+c_kX_k^2)^{-1},
\]
where default Halley's iteration uses $a = c = 3, b = 1$ and the scaled Halley in \cite{nakatsukasa2010optimizing} uses certain optimal coefficients.

Newton variant is essentially a variant of Newton's method where we do
\[
\textbf{\textrm{(SignNewtonVariant)}} \quad\quad X_{k+1}  = 2X_k(I+X_k^2)^{-1}.
\]
This is the inverse of SignNewton, and $X_k$ converges to $\text{sign}(A)^{-1}$, which is $\text{sign}(A)$ when $A$ is invertible.

\subsection{Iterative methods associated with square root}

The simplest method in this case is also the Newton's method, 
\[
\textbf{\textrm{(SqrtNewton)}} \quad\quad X_{k+1} = \frac{1}{2}(X_k + X_k^{-1}A).
\]
The above method can be unstable, which led to the development of coupled iterations. Denman-Beavers iteration uses the following coupled iterationn of $X_k$ and $Y_k$: Denman-Beavers is initialized with $X_0 = A, Y_0 = I$ and iteratively applies
\[
\textbf{\textrm{(SqrtDenmanBeavers)}} \quad\quad \begin{cases}
X_{k+1} = \frac{1}{2}(X_k + Y_k^{-1})\\
Y_{k+1} = \frac{1}{2}(Y_k + X_k^{-1}).
\end{cases}
\]
Here, $X_k \rightarrow A^{1/2}$ and $Y_k \rightarrow A^{-1/2}$. There is a variant of Denman-Beavers that avoids computing matrix inverse - introduced in \cite{higham1997stable}, the iteration writes
\[
\textbf{\textrm{(SqrtNewtonSchulzVariant)}} \quad\quad \begin{cases}
X_{k+1} = \frac{1}{2}(3X_k - X_kY_kX_k)\\
Y_{k+1} = \frac{1}{2}(3Y_k - Y_kX_kY_k).
\end{cases}
\]
Like ScaledNewton, we have a scaled variant of Denman-Beavers. The scaling we use is a variant of Byer's scaling \cite{byers1987solving} introduced in \cite{higham1997stable}. The iteration is given as
\[
\textbf{\textrm{(SqrtScaledDenmanBeavers)}} \quad\quad \begin{cases}
\gamma_k = |\det X_k \det Y_k|^{-1/2n}\\
X_{k+1} = \frac{1}{2}(\gamma_kX_k + (\gamma_kY_k)^{-1})\\
Y_{k+1} = \frac{1}{2}(\gamma_kY_k + (\gamma_kX_k)^{-1}).
\end{cases}
\]
The cost of computing $\gamma_k$ is negligible when we use decomposition methods such as LU decomposition or Cholesky to compute matrix inverse. 

At last, there is the fixed-point iteration, which we will denote as the Visser iteration \cite{higham2008functions}. The Visser iteration is  given as
\[
\textbf{\textrm{(SqrtVisser)}} \quad\quad X_{k+1} = X_k + \frac{1}{2}(A - X_k^2).
\]

\subsection{Iterative methods associated with 1/3-root}

There are a number of stable methods to compute matrix $p$-th root (see \cite{iannazzo2006newton} for different methods). We use the following method as a baseline: initialize $X_0 = I, Y_0 = A$ and  
\[
\textbf{\textrm{(prootIannazzo)}} \quad\quad
\begin{cases}
X_{k+1} = X_k(\frac{2I + Y_k}{3})\\
Y_{k+1} = (\frac{2I + Y_k}{3})^{-3}Y_k
\end{cases}
\]
With this iteration, $X_k \rightarrow A^{1/3}$ and $Y_k \rightarrow I$. We have Newton's method and Visser's iteration as we had for square root:
\[
\textbf{\textrm{(prootNewton)}} \quad\quad X_{k+1} = (2X_k+X_kA^{-2})/3
\]
is the Newton's method, and 
\[
\textbf{\textrm{(prootVisser)}} \quad\quad X_{k+1} = X_k + \frac{1}{3}(A - X_k^3)
\]
becomes Visser's iteration.
\subsection{Summary}
We present a table of the matrix functions and iterations that we used.
\begin{table}[h!]
\label{table:baselines}
\caption{Iterative methods for computing matrix inverse, sign, square root, and 1/3-root.}
\centering
\renewcommand{\arraystretch}{1.4}
\begin{tabular}{@{}ll@{}}
\toprule
\textbf{Method} & \textbf{Iteration Formula} \\
\midrule
\multicolumn{2}{c}{\textbf{Methods for Inverse}} \\
\midrule
Newton (Schulz) & $X_{k+1} = 2X_k - X_k A X_k$ \\
Chebyshev & $X_{k+1} = 3X_k - 3X_k A X_k + X_k A X_k A X_k$ \\
\midrule
\multicolumn{2}{c}{\textbf{Methods for Sign}} \\
\midrule
Newton & $X_{k+1} = \frac{1}{2}(X_k + X_k^{-1})$ \\
NewtonSchulz & $X_{k+1} = \frac{1}{2}(3X_k - X_k^3)$ \\
ScaledNewton & $X_{k+1} = \frac{1}{2}(\mu_k X_k + (\mu_k X_k)^{-1})$ \\
ScaledNewtonSchulz & $X_{k+1} = \frac{3}{2} \rho_k X_k - \frac{1}{2} (\rho_k X_k)^3$ \\
Halley & $X_{k+1} = X_k (a_k I + b_k X_k^2)(I + c_k X_k^2)^{-1}$ \\
NewtonVariant & $X_{k+1} = 2X_k(I + X_k^2)^{-1}$ \\
\midrule
\multicolumn{2}{c}{\textbf{Methods for SquareRoot}} \\
\midrule
Newton & $X_{k+1} = \frac{1}{2}(X_k + X_k^{-1} A)$ \\
DenmanBeavers &
$\begin{cases}
X_{k+1} = \frac{1}{2}(X_k + Y_k^{-1}) \\
Y_{k+1} = \frac{1}{2}(Y_k + X_k^{-1})
\end{cases}$ \\
NewtonSchulzVariant &
$\begin{cases}
X_{k+1} = \frac{1}{2}(3X_k - X_k Y_k X_k) \\
Y_{k+1} = \frac{1}{2}(3Y_k - Y_k X_k Y_k)
\end{cases}$ \\
ScaledDenmanBeavers &
$\begin{cases}
\gamma_k = |\det X_k \det Y_k|^{-1/2n} \\
X_{k+1} = \frac{1}{2}(\gamma_k X_k + (\gamma_k Y_k)^{-1}) \\
Y_{k+1} = \frac{1}{2}(\gamma_k Y_k + (\gamma_k X_k)^{-1})
\end{cases}$ \\
Visser & $X_{k+1} = X_k + \frac{1}{2}(A - X_k^2)$ \\
\midrule
\multicolumn{2}{c}{\textbf{Methods for 1/3-th Root}} \\
\midrule
Iannazzo &
$\begin{cases}
X_{k+1} = X_k \left(\frac{2I + Y_k}{3}\right) \\
Y_{k+1} = \left(\frac{2I + Y_k}{3}\right)^{-3} Y_k
\end{cases}$ \\
Newton & $X_{k+1} = \frac{1}{3}(2X_k + X_k A^{-2})$ \\
Visser & $X_{k+1} = X_k + \frac{1}{3}(A - X_k^3)$ \\
\bottomrule
\end{tabular}
\end{table}

\newpage
\section{Experimental Details}
\label{sec:expdetails}

\subsection{Detailed explanation of \cref{MatRL}}
\cref{MatRL} is summarized in \cref{sec:SearchingStrategy}. Here we explain how each subroutine $Expandable$, $Best_{UCB}$, $ExpandNode$, $SampleRolloutList$, and $backpropagate$ is implemented.

To begin with, we have two important flags at each state. One flag is IsTransitionable: if the iteration type and all parameters for that iteration is fixed, we set IsTransitionable(s) = True. Else, IsTransitionable(s) = False. Another flag is IsCoupled: for the root state, IsCoupled = True. If you use a coupled iteration at a state where IsCoupled = True, IsCoupled = True at the next state also. If you use an iteration that is not coupled, we set IsCoupled = False. When IsCoupled = True, you can do either coupled or uncoupled iteration. When IsCoupled = False, you can either do the "coupling" iteration (that would be specified later for each matrix function) or an iteration that is not coupled. Whenn you do the coupling iteration, IsCoupled = True for the next state, else IsCoupled = False. A table that summmarizes the transition of IsCoupled is as below.

\begin{table}[htbp]
\centering
\caption{State transition of IsCoupled}
\begin{tabular}{ccc}
\toprule
\textbf{Current IsCoupled} & \textbf{Iteration Type} & \textbf{Next IsCoupled} \\
\midrule
True  & Coupled iteration         & True \\
True  & Uncoupled iteration       & False \\
False & Coupling iteration        & True \\
False & Non-coupling iteration    & False \\
\bottomrule
\end{tabular}
\end{table}

$Expandable(s)$ is a method that determines whether it is possible to expand a child node from current node $s$. If $IsTransitionable(s) = True$, the possible choice of next action becomes a discrete set of iterations. Hence $Expandable(s) = True$ if $IsCoupled(s) = False$ and number of children of $s$ $<$ number of iterations that are not coupled, or $IsCoupled(s) = True$ andd number of children of $s < $ number of iterations - 1. We subtract 1 because we will not expand with coupling iteration. If $IsTransitionable = False$, we expand with a continuous variable hence we do progressive widening. If number of children of $s < C_{pw}N(s)^{\alpha_{pw}}$ where $n(s)$ is the number of visits for node $s$, we return True and else return False.  
\begin{algorithm}
\caption{$Expandable(s)$}
\begin{algorithmic}[1]
\IF{$IsTransitionable(s)$}
    \IF{$IsCoupled(s) = \text{False}$}
        \IF{number of children of $s$ $<$ number of \emph{non-coupled} iterations}
            \RETURN True
        \ELSE
            \RETURN False
        \ENDIF
    \ELSE
        \IF{$\text{num\_children}(s) < \text{num\_iterations} - 1$}
        \RETURN True
        \ELSE
        \RETURN False
        \ENDIF
    \ENDIF
\ELSE 
    \IF{number of children of $s$ $< C_{pw} n(s)^{\alpha_{pw}}$}
        \RETURN True
    \ELSE
        \RETURN False
    \ENDIF
\ENDIF
\end{algorithmic}
\end{algorithm}

$Best_{UCB}(s)$ is simple: Choose the child node $c$ with the maximal value of $V(c) + C_{ucb}\sqrt{\frac{\log n(s)+1}{n(c)}}$.

\begin{algorithm}
\caption{$Best_{UCB}(s)$}
\begin{algorithmic}[1]
\STATE \textbf{Input:} Node $s$ with children set $\mathcal{C}(s)$
\STATE \textbf{Parameters:} Exploration constant $C_{ucb}$
\STATE $best\_value \gets -\infty$
\STATE $best\_child \gets$ null
\FORALL{$c \in \mathcal{C}(s)$}
    \STATE $score \gets V(c) + C_{ucb} \sqrt{\frac{\log (n(s)) + 1}{n(c)}}$
    \IF{$score > best\_value$}
        \STATE $best\_value \gets score$
        \STATE $best\_child \gets c$
    \ENDIF
\ENDFOR
\RETURN $best\_child$
\end{algorithmic}
\end{algorithm}

$ExpandNode(s)$ depends on $IsTransitionable$. If $IsTransitionable(s) = True$, simply adding a node that hasn't  been visited is enough, because the children are discrete. If $IsTransitionable(s) = False$, the children can take continuous parameters. If $num\_child(s) \leq E$ for hyperparameter $E$, do random sampling in range $[lo, hi]$ that is prespecified. Else, find the child with the best value and sample near that parameter $p$. Specifically, with probability 0.05, sample uniformly at random from $[lo, hi]$. Else sample random uniform at a new interval $[lo,hi] \cap [p - stddev\_scale * (hi - lo) / 2.0, p + stddev\_scale * (hi - lo) / 2.0]$. $stddev\_scale = 1/\log(2+n(s))$ decays logarithmically with $n(s)$, the visit count. 

\begin{algorithm}
\caption{$ExpandNode(s)$}
\begin{algorithmic}[1]
\IF{$IsTransitionable(s)$}
    \STATE Add a new discrete child node to $s$
\ELSE
    \IF{$num\_child(s) \leq E$} 
        \STATE Sample $x \sim \mathcal{U}[lo, hi]$
        \STATE Add child node with parameter $x$
    \ELSE 
        \STATE $p \gets$ parameter of best-value child of $s$
        \STATE $w \gets (hi - lo)/2$
        \STATE $stddev\_scale \gets 1/\log(2 + n(s))$
        \STATE $r \sim \mathcal{U}[0,1]$
        \IF{$r < 0.05$}
            \STATE Sample $x \sim \mathcal{U}[lo, hi]$
        \ELSE
            \STATE Define interval $I = [lo, hi] \cap [p - stddev\_scale \cdot w,\ p + stddev\_scale \cdot w]$
            \STATE Sample $x \sim \mathcal{U}[I]$
        \ENDIF
        \STATE Add child node with parameter $x$
    \ENDIF
\ENDIF
\end{algorithmic}
\end{algorithm}

$SampleRolloutList()$ samples a baseline rollout algorithm that is consisted of mutiple iterations of well-working baselines such as scalednewton for sign or scaled Denman-Beavers for matrix square root. If the rollout is coupled iteration but the current state is not coupled, we append the coupling iteration at the front of rollout.

At last, $Backpropogate(s)$ uses Bellman equation to update $V(s)$ in the path from root to $s$ and if $V(s_0)$ is updated, we update bestpath and bestrollout accordingly. 

\subsection{Experimental Environment}
All GPU based experiments were done in NVIDIA RTX A-6000 and CPU based experiments were done in AMD EPYC 7713 64-Core Processor. We repeated the experiments five times and picked  the best algorithm, and if the method diverged for five times we ran additional experiments to find a good algorithm.

\subsection{Hyperparameters}
Here we detail the hyperparameters in \cref{MatRL}: this includes basic parameters such as $\alpha$ in progressive widening, list of possible actions for each matrix function, and $RolloutList$ for each matrix function. 

We set $C_{pw} = 2, \alpha_{pw} = 0.3,C_{ucb} = 5, E = 5, \epsilon_{tol} = 1e-6$ and $1e-11$ for the experiments. The loss function and RolloutList for each matrix function is as below:

\begin{table}[h]
\centering
\caption{Loss function, action list, and rollout list for each matrix function}
\begin{tabular}{l p{2.5cm} p{4cm} p{4cm}}
\toprule
\textbf{Function} & \textbf{Loss Function} & \textbf{ActionList} & \textbf{RolloutList} \\
\midrule

\textbf{Inv} 
& $\frac{\lVert AX - I \rVert_F}{\lVert A \rVert_F}$ 
& \texttt{[Inv\_NS, Inv\_Chebyshev]} 
& \texttt{[Inv\_NS, Inv\_Chebyshev]} \\

\addlinespace
\textbf{Sign} 
& $\frac{\lVert X^2 - I \rVert_F}{\lVert A \rVert_F}$ 
& \texttt{[Sign\_NS, Sign\_Newton, Sign\_Quintic, Sign\_Halley]} 
& \texttt{[Sign\_ScaledNS, Sign\_ScaledNewton, Sign\_Halley]} \\

\addlinespace
\textbf{Sqrt} 
& $\frac{\lVert X^2 - A \rVert_F}{\lVert A \rVert_F}$ 
& \texttt{[Sqrt\_DB, Sqrt\_NSV, Sqrt\_Visser, Sqrt\_VisserCoupled, Sqrt\_Coupling]} 
& \texttt{[Sqrt\_ScaledDB, Sqrt\_NSV]} \\

\addlinespace
\textbf{Proot} 
& $\frac{\lVert X^3 - A \rVert_F}{\lVert A \rVert_F}$ 
& \texttt{[Proot\_Newton, Proot\_Visser, Proot\_Iannazzo, Proot\_Coupling]} 
& \texttt{[Proot\_Newton, Proot\_Visser, Proot\_Iannazzo]} \\

\bottomrule
\end{tabular}
\end{table}

Each iteration in ActionList is parametrized to have tunable parameters. A full table denoting how each action is parameterized is as \cref{table:actionlist}. 

\begin{table}[h!]
\caption{How actions are parametrized}
\centering
\renewcommand{\arraystretch}{1.4}
\begin{tabular}{@{}lll@{}}
\toprule
\textbf{Method} & \textbf{Iteration Formula} & \textbf{Parameter Range} \\
\midrule
\multicolumn{3}{c}{\textbf{Actions for Inverse}} \\
\midrule
Newton (Schulz) & $X_{k+1} = a_kX_k - b_kX_k A X_k$ & $a_k, b_k \in [0, 5]$ \\
Chebyshev & $X_{k+1} = a_kX_k - b_kX_k A X_k + c_kX_k A X_k A X_k$ & $a_k, b_k, c_k \in [0, 5]$ \\
\midrule
\multicolumn{3}{c}{\textbf{Actions for Sign}} \\
\midrule
Newton & $X_{k+1} = \frac{1}{2}(a_kX_k + (a_kX_k)^{-1})$ & $a_k \in [0, 40]$ \\
NewtonSchulz & $X_{k+1} = X_k + a_k(b_kX_k - (b_kX_k)^3)$ & $a_k, b_k \in [0, 5]$ \\
Quintic & $X_{k+1} = a_kX_k+b_kX_k^3+c_kX_k^5$ & $a_k, b_k, c_k \in [0, 5]$ \\
Halley & $X_{k+1} = X_k (a_k I + b_k X_k^2)(I + c_k X_k^2)^{-1}$ & $a_k, b_k, c_k \in [0, 40]$ \\
\midrule
\multicolumn{3}{c}{\textbf{Actions for SquareRoot}} \\
\midrule
DenmanBeavers &
$\begin{cases}
X_{k+1} = \frac{1}{2}(a_kX_k + (b_kY_k)^{-1}) \\
Y_{k+1} = \frac{1}{2}(b_kY_k + (a_kX_k)^{-1})
\end{cases}$ & $a_k, b_k \in [0, 50]$ \\
NewtonSchulzVariant &
$\begin{cases}
X_{k+1} = \frac{1}{2}(a_kX_k - b_kX_k Y_k X_k) \\
Y_{k+1} = \frac{1}{2}(a_kY_k - b_kY_k X_k Y_k)
\end{cases}$ & $a_k, b_k \in [0, 5]$ \\
Visser & $X_{k+1} = a_kX_k + b_k(A - X_k^2)$ & $a_k, b_k \in [0, 10]$ \\
Visser\_Coupled & 
$\begin{cases}
X_{k+1} = a_kX_k + b_k(A - X_k^2) \\
Y_{k+1} = a_kY_k + b_k(I - X_kY_k)
\end{cases}$ & $a_k, b_k \in [0, 10]$ \\
Coupling & $Y_k = X_kA^{-1}$ & -- \\
\midrule
\multicolumn{3}{c}{\textbf{Actions for 1/3-th Root}} \\
\midrule
Iannazzo &
$\begin{cases}
X_{k+1} = X_k \left(\frac{a_kI + b_kY_k}{3}\right) \\
Y_{k+1} = \left(\frac{a_kI + b_kY_k}{3}\right)^{-3} Y_k
\end{cases}$ & $a_k, b_k \in [0, 10]$ \\
Newton & $X_{k+1} = \frac{1}{3}(a_kX_k + b_kX_k A^{-2})$ & $a_k, b_k \in [0, 10]$ \\
Visser & $X_{k+1} = a_kX_k + b_k(A - X_k^3)$ & $a_k, b_k \in [0, 10]$ \\
Coupling & $Y_k = AX_k^{-3}$ & -- \\
\bottomrule
\end{tabular}
\label{table:actionlist}
\end{table}

\subsection{List of Distributions}
The list of distributions we used throughout the experiments are as follows:\\
1. \textbf{Wishart} denotes $A = \frac{X^\top X}{3d} + \epsilon_{\text{stb}} I$ where $X \in \mathbb{R}^{d/4 \times d}$, $X_{ij} \sim \mathcal{N}(0, 1)$ i.i.d.. $\epsilon_{\text{stb}} = 1e-3$ exists for numerical stability.\\
2. \textbf{Uniform} denotes $A = QDQ^{T}$ where $Q$ is sampled from a Haar distribution and $D$ is a diagonal matrix where its entries are sampled from uniform $[-1, 1]$. We cap the diagonal entries with absolute value < 1e-3 to 1e-3.\\
3. \textbf{Hessian of Quartic} is the indefinite Hessian of a $d$-dimensional quartic $\sum_i z_i^4/4 - z_i^2/4$ evaluated at a random point $z \sim \mathcal{N}(0, \mathbf{I_d})$. We cap the eigenvalues with absolute value < 1e-3 to 1e-3, and normalize with Frobenius norm.\\
4. \textbf{CIFAR-10} is the random input matrix is $\hat\Sigma = \frac{1}{n} X^T X$  where $X \in \mathbb{R}^{n \times d}$ is a random batch of $n$ flattened CIFAR-10 images. We normalize with the Frobenius norm and add $\epsilon_{stb}I$ for $\epsilon_{stb} = 1e-3$.\\
5. \textbf{Erdos-Renyi} is the normalized graph Laplacian of a random Erdos-Renyi graph. We set $p = 0.4$ and $d = 5000$ for the experiments.
\clearpage

\section{List of all experimental results}
\label{sec:expresults_raw}

\subsection{Different matrix functions}

\textbf{Inverse} We learn to compute matrix inverse for two different distributions, Wishart and Uniform. Unfortunately, in our experiments, using Newtonschulz to compute matrix inverse was much slower than directly using torch.linalg.inv. However for uniform distribution, we had a more precise approximation of the inverse in terms of the loss than torch.linalg.inv.

\begin{figure}[H]
    \centering
    \begin{subfigure}[b]{0.45\textwidth}
        \centering
        \includegraphics[width=\linewidth]{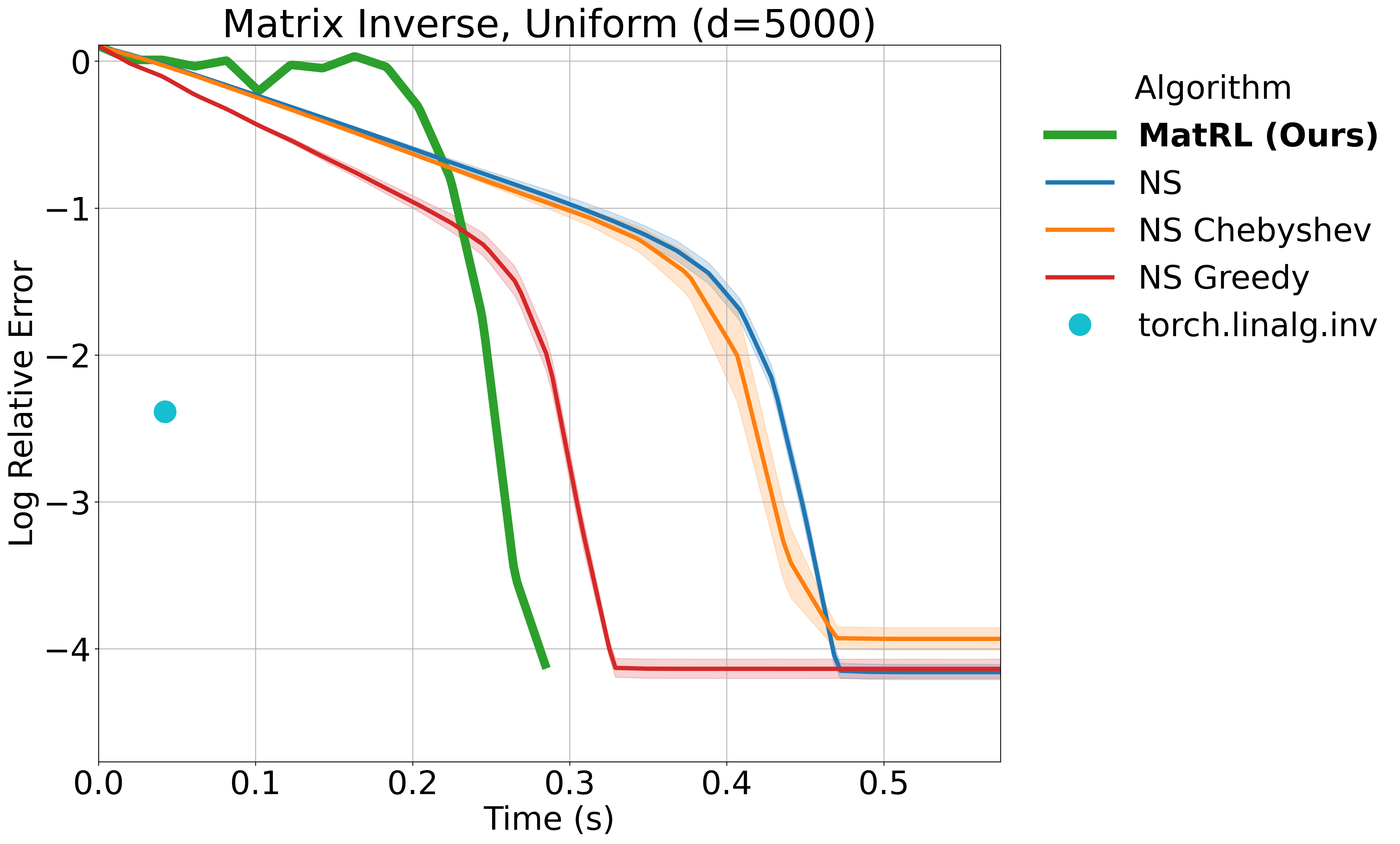}
        \caption{Uniform distribution}
    \end{subfigure}
    \hfill
    \begin{subfigure}[b]{0.45\textwidth}
        \centering
        \includegraphics[width=\linewidth]{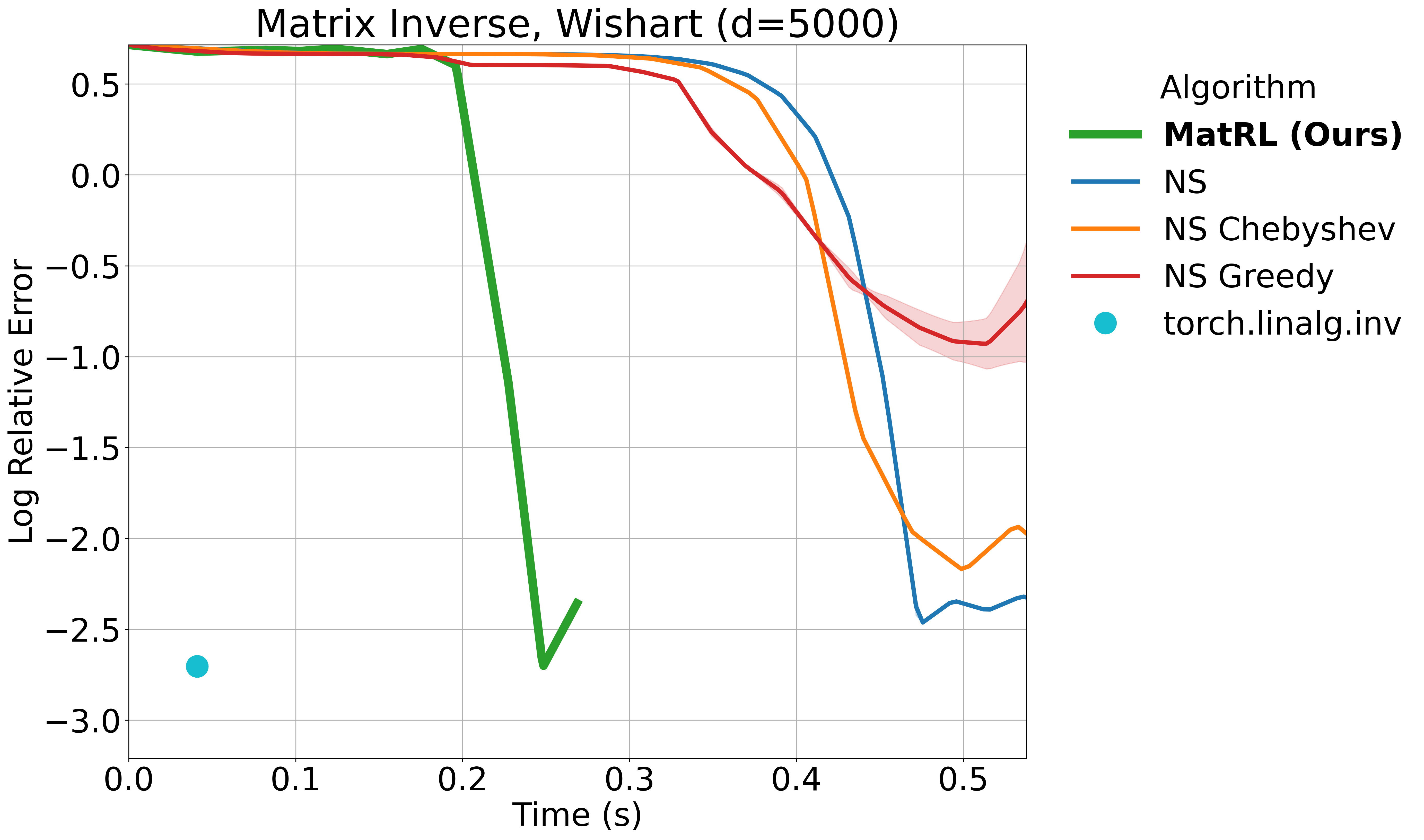}
        \caption{Wishart distribution}
    \end{subfigure}
    \caption{Computing matrix inverse with NewtonSchulz and variants}
\end{figure}

\textbf{Matrix sign} We learn matrix sign for Quartic Hessian and for matrices with Uniform [-1, 1] diagonal entries. Here $d = 5000$. For quartic hessian we use $\epsilon_{tol} = 1e-11$.

\begin{figure}[H]
    \centering
    \begin{subfigure}[b]{0.45\textwidth}
        \centering
        \includegraphics[width=\linewidth]{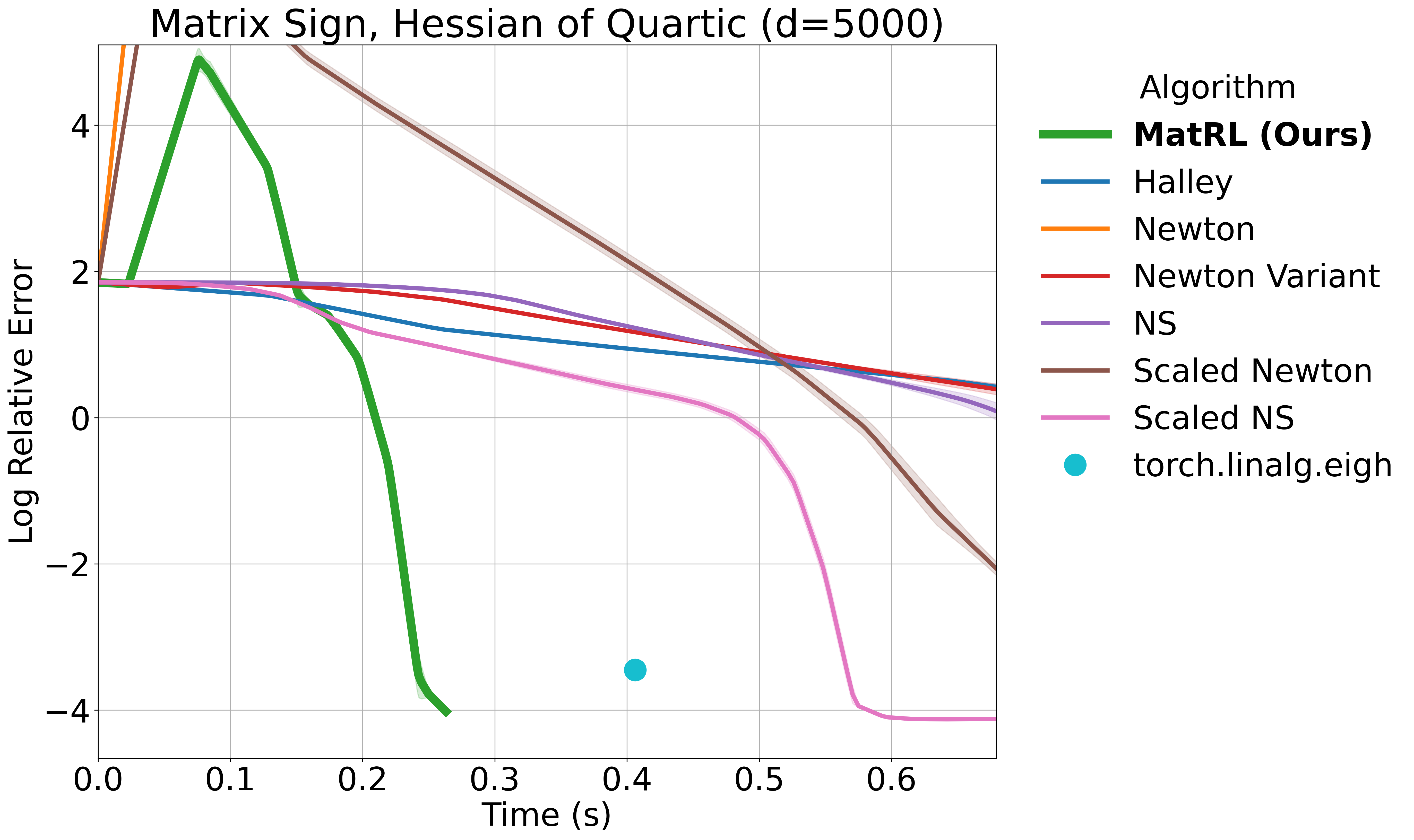}
        \caption{Hessian of Quartic}
    \end{subfigure}
    \hfill
    \begin{subfigure}[b]{0.45\textwidth}
        \centering
        \includegraphics[width=\linewidth]{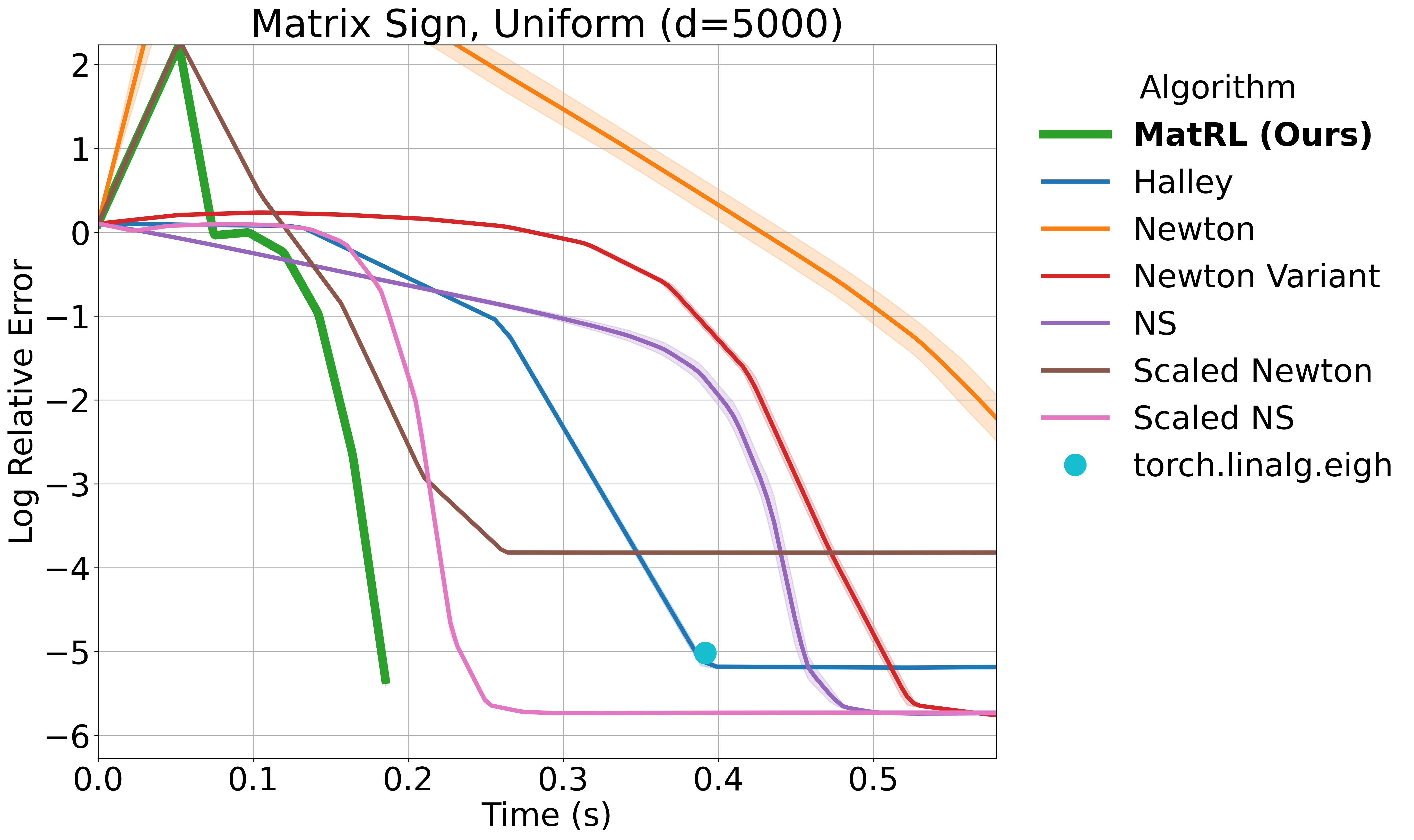}
        \caption{Uniform distribution}
    \end{subfigure}
    \caption{Computing matrix sign with NewtonSchulz and variants}
\end{figure}
\textbf{Matrix sqrt} We learn matrix sqrt for CIFAR-10 and Wishart matrices with $d=5000$. For CIFAR-10, double precision was used to learn the algorithm.

\begin{figure}[H]
    \centering
    \begin{subfigure}[b]{0.45\textwidth}
        \centering
        \includegraphics[width=\linewidth]{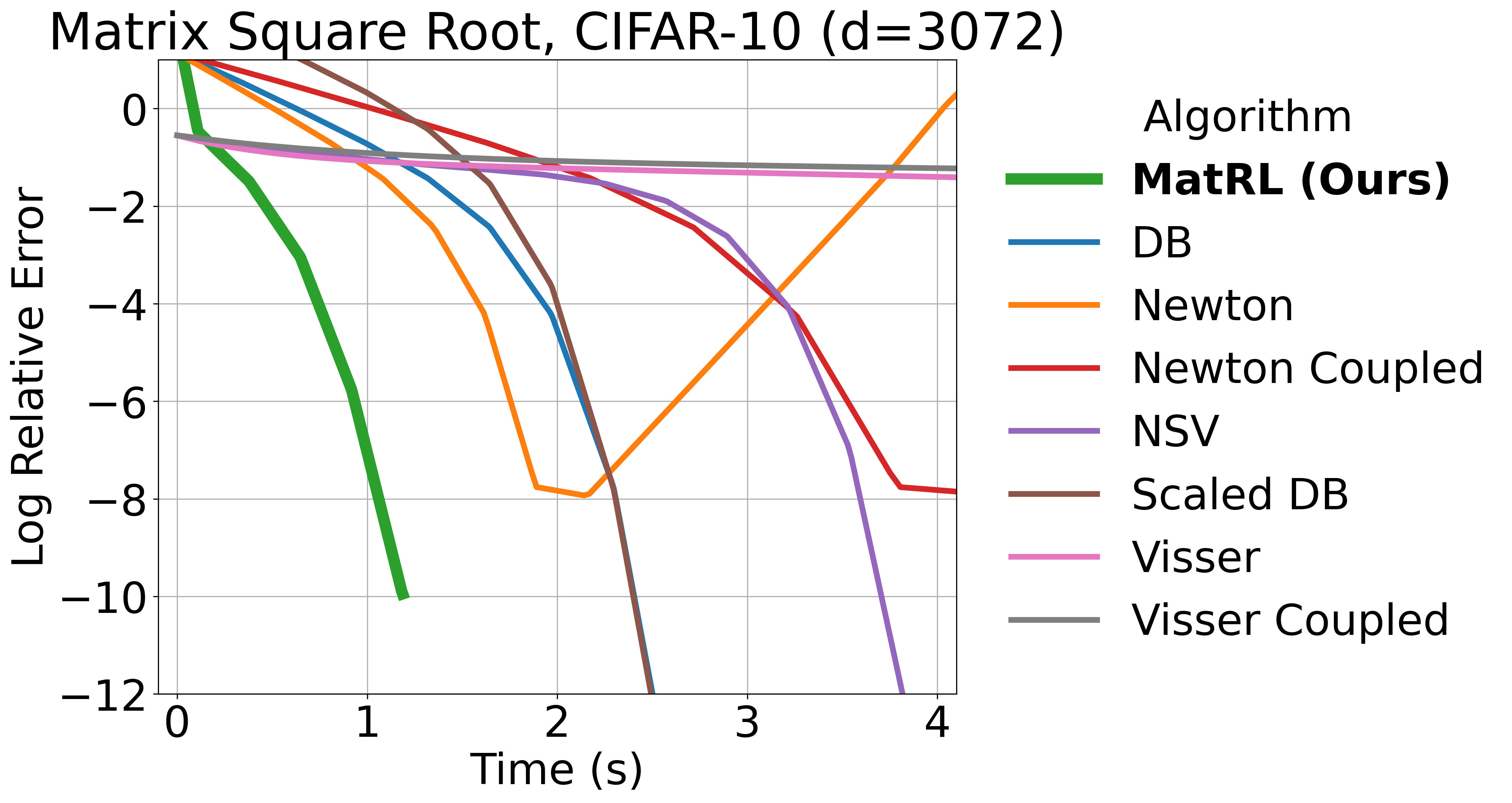}
        \caption{CIFAR-10}
    \end{subfigure}
    \hfill
    \begin{subfigure}[b]{0.45\textwidth}
        \centering
        \includegraphics[width=\linewidth]{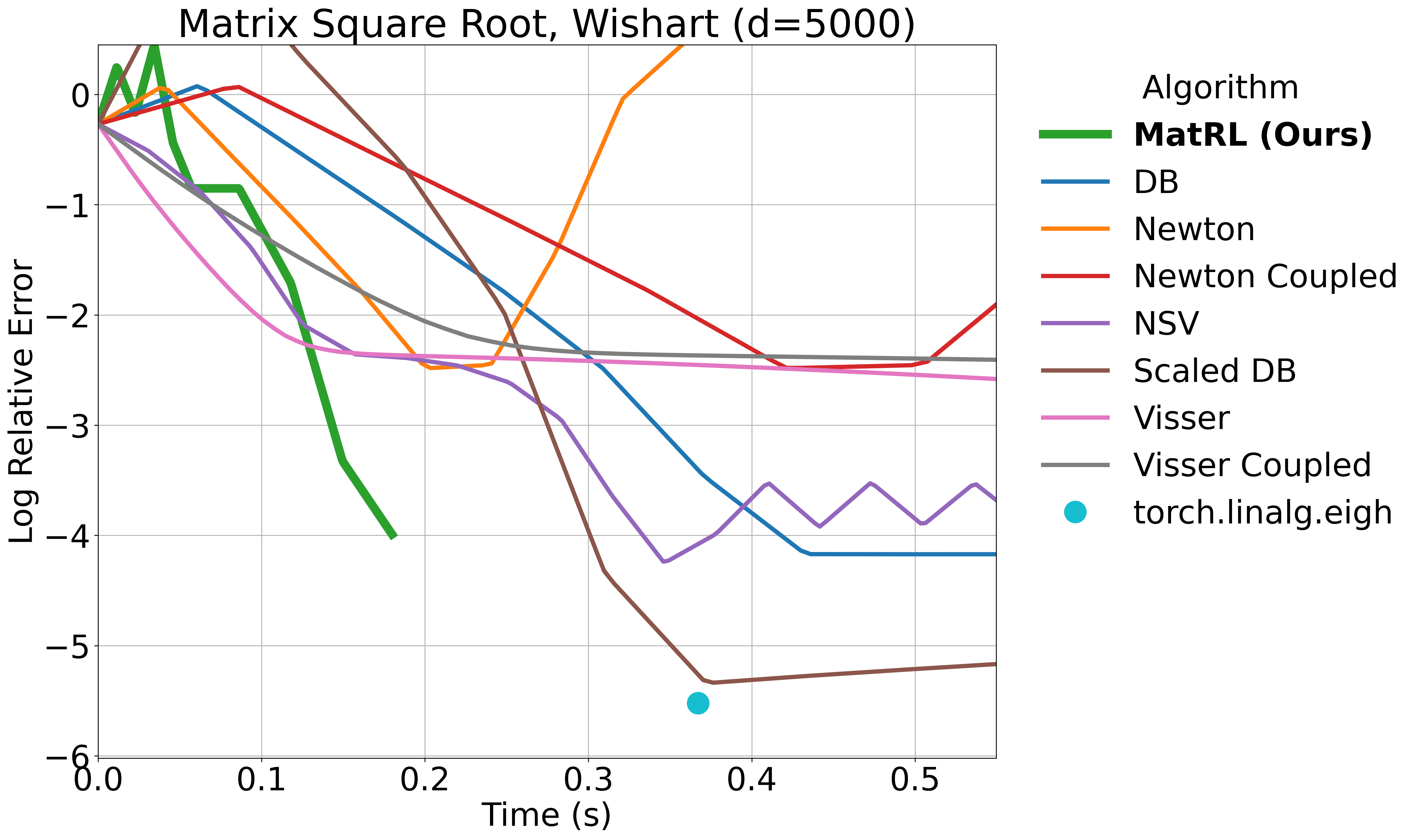}
        \caption{Wishart distribution}
    \end{subfigure}
    \caption{Computing matrix square root with NewtonSchulz and variants}
\end{figure}

\textbf{Matrix 1/3-root} We learn matrix 1/3-root for Wishart matrices and Erdos-Renyi graph. For Wishart matrices the method is not very effective: torch.linalg.eigh can find matrix 1/3-root with better accuracy with the same amount of time. However, for normalized graph Laplacians of Erdos-Renyi graph, it finds a faster algorithm with almost similar accuracy.

\begin{figure}[H]
    \centering
    \begin{subfigure}[b]{0.45\textwidth}
        \centering
        \includegraphics[width=\linewidth]{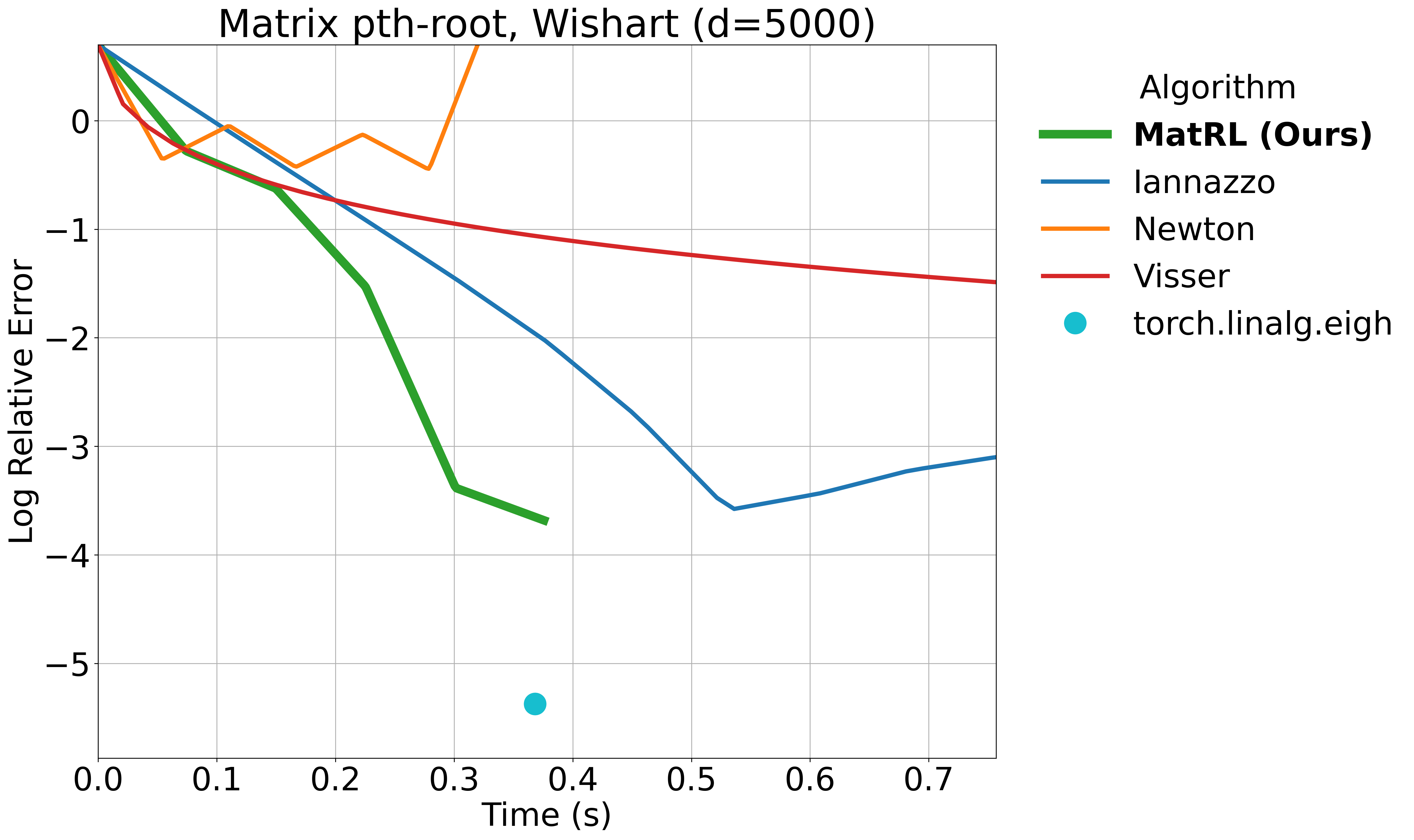}
        \caption{Wishart distribution}
    \end{subfigure}
    \hfill
    \begin{subfigure}[b]{0.45\textwidth}
        \centering
        \includegraphics[width=\linewidth]{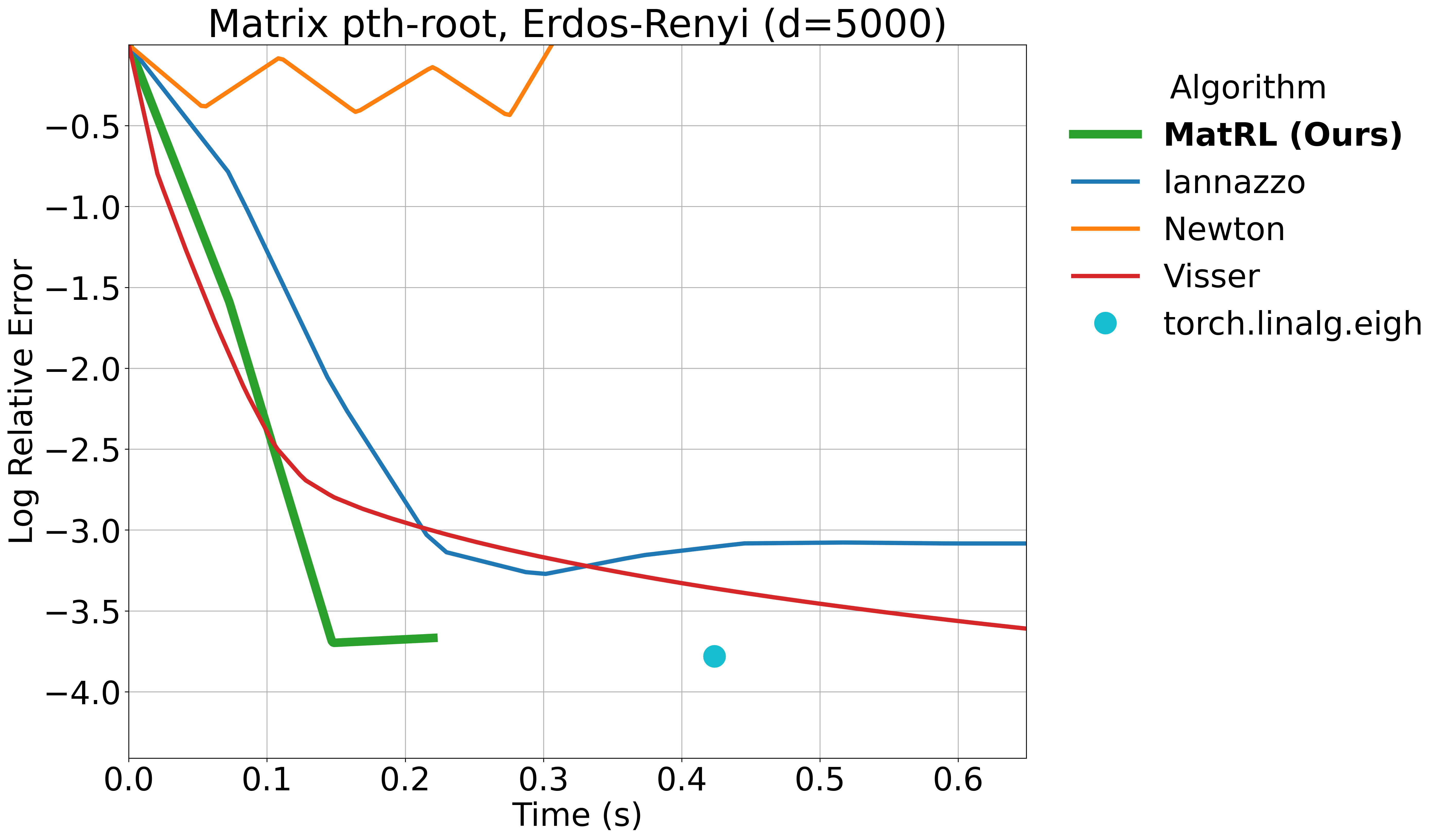}
        \caption{Normalized graph Laplacian of Erdős-Rényi}
    \end{subfigure}
    \caption{Examples of matrix distributions: structural or spectral views.}
\end{figure}

\subsection{\cref{MatRL} Adapting: sizes, precision, compute}

We demonstrate that different $d$, precision (float or double), and compute (GPU/CPU) can lead to different algorithms with matrix sign computation. We show both the loss curve and the found algorithm in each case.

\textbf{Different problem sizes} Here we show the results to compute matrix sign on random matrices with spectrum $Unif[-1, 1]$. $d = 1500, 3000, 5000, 10000$. One trend that we see is that for small $d$, we tend to use NewtonSchulz more, whereas for larger $d$ we tend to use Newton step more. It is related with the relative cost between Newton step and Newtonschulz step.

\begin{figure}[H]
    \centering
    \begin{subfigure}[b]{0.45\textwidth}
        \centering
        \includegraphics[width=\linewidth]{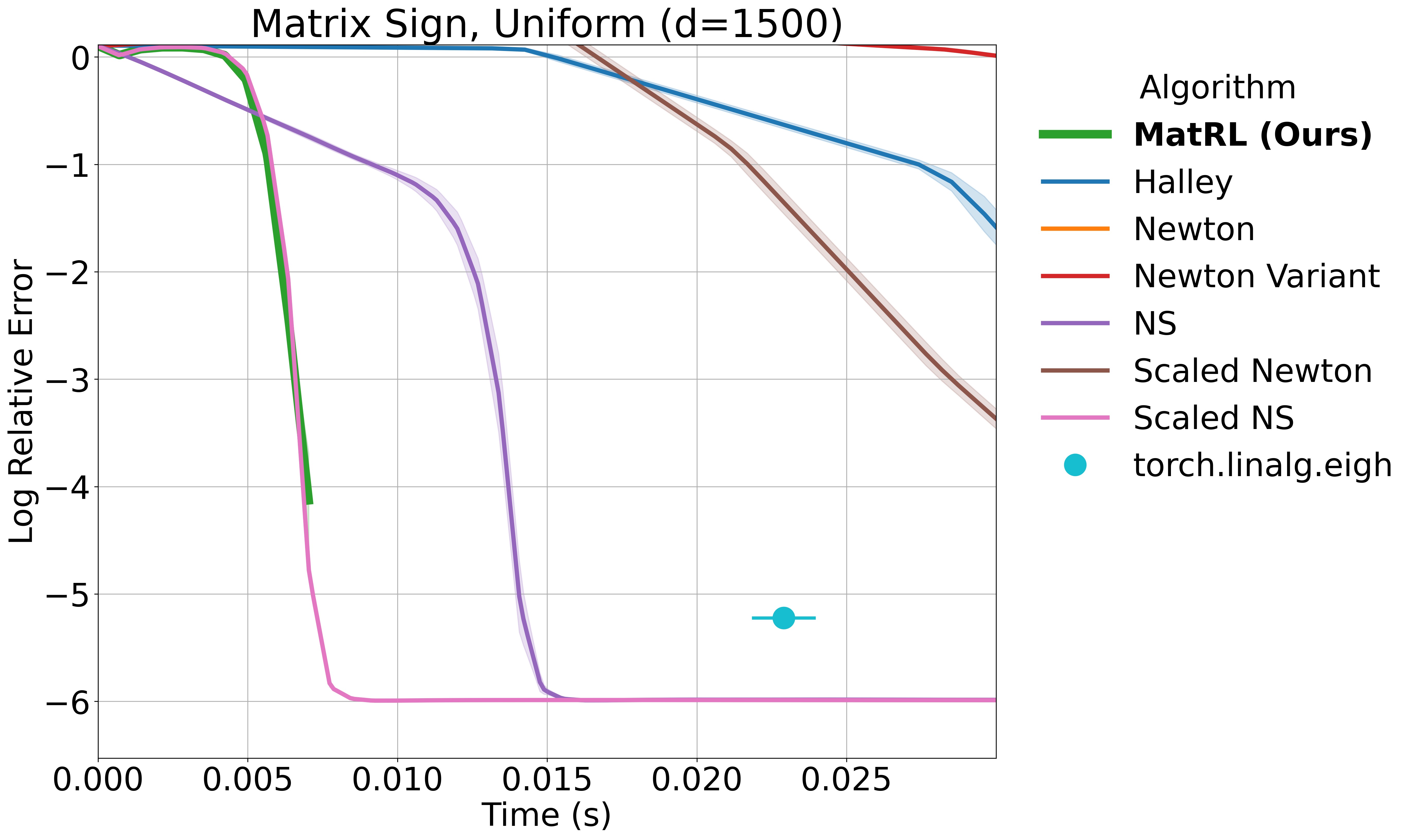}
        \caption{$d = 1500$}
    \end{subfigure}
    \hfill
    \begin{subfigure}[b]{0.45\textwidth}
        \centering
        \includegraphics[width=\linewidth]{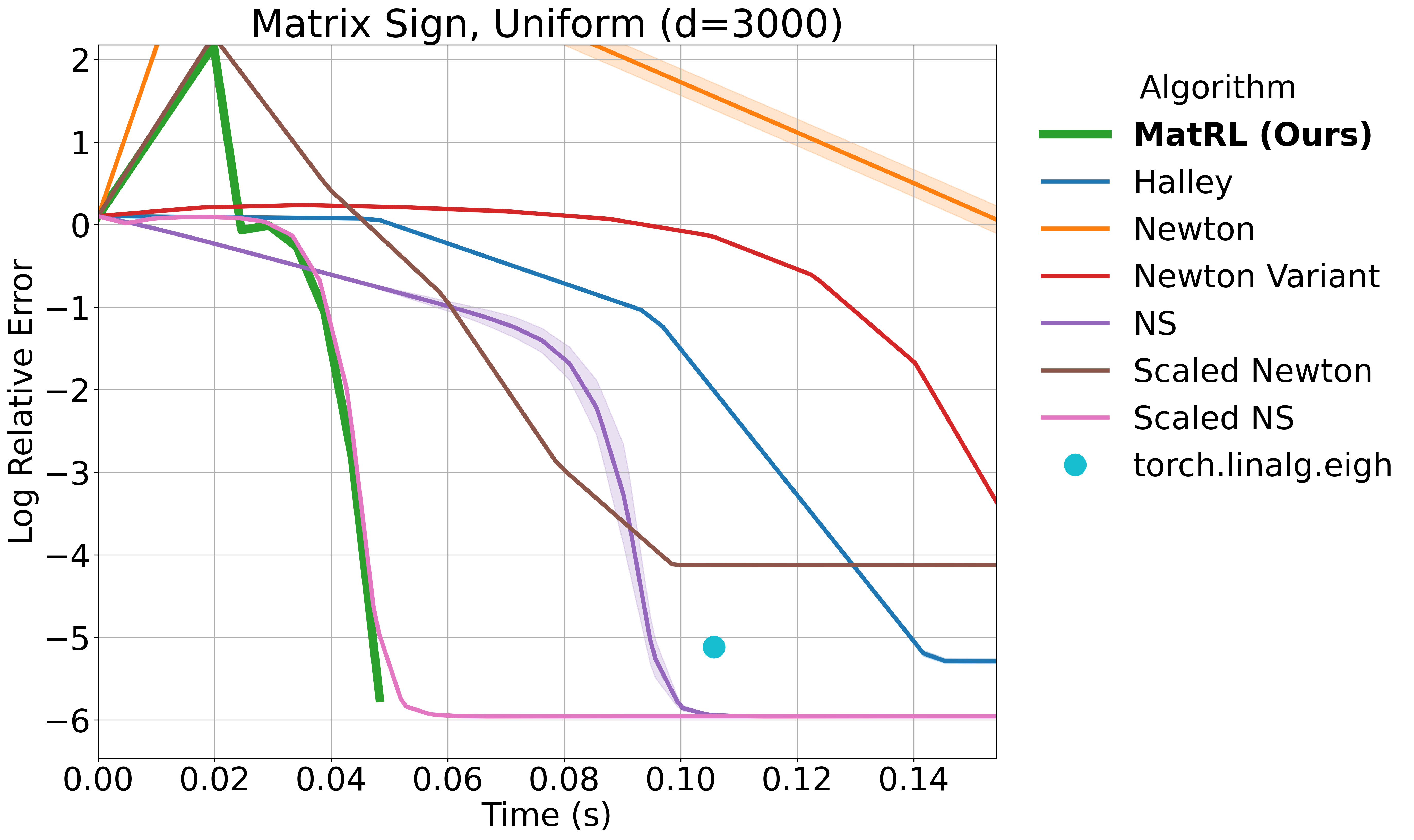}
        \caption{$d = 3000$}
    \end{subfigure}
    \begin{subfigure}[b]{0.45\textwidth}
        \centering
        \includegraphics[width=\linewidth]{Figures/sign_unif_d=5000.png}
        \caption{$d = 5000$}
    \end{subfigure}
    \hfill
    \begin{subfigure}[b]{0.45\textwidth}
        \centering
        \includegraphics[width=\linewidth]{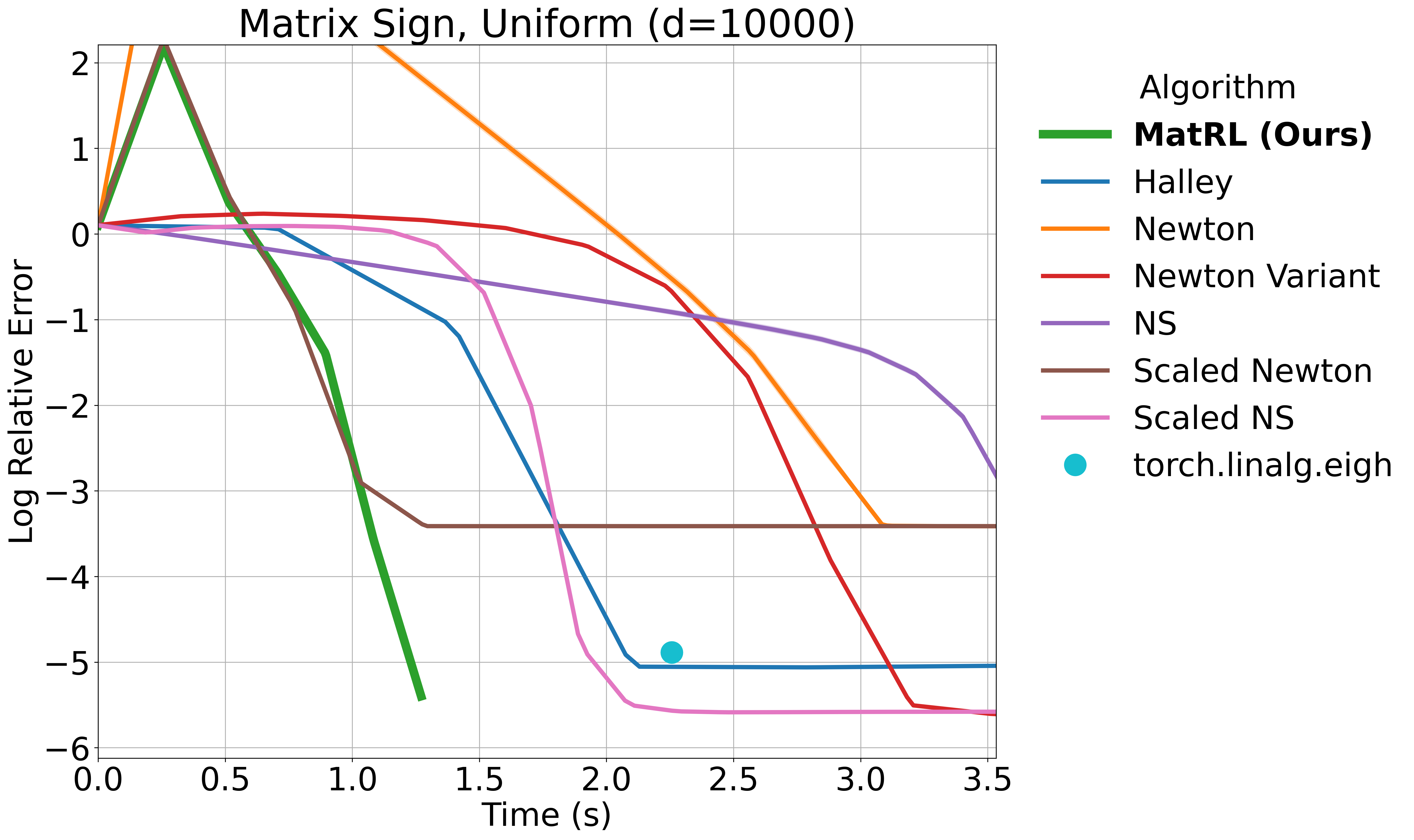}
        \caption{$d = 10000$}
    \end{subfigure}
    \caption{Computing matrix sign for different matrix sizes}
\end{figure}

The found algorithms for $d = 1500, 3000, 5000, 10000$ are as follows. We set $\epsilon_{tol} = 1e-11$ for $d=10000$.
\begin{algorithm}
\caption{Iterative SIGN for Uniform on GPU with $d = 1500$}
\begin{algorithmic}
  \STATE \textbf{Input:} $A$
  \STATE Initialize $X_0 = A$
  \STATE Set $a \gets [1.731, 1.729, 1.724, 1.712, 1.680, 1.606, 1.439, 1.190, 1.029, 1.000]$, 
  \STATE // rounded off to three digits
  \FOR {$i=1$ to 10}
  \STATE $X_i = a_{i-1}X_{i-1} + 0.5(a_{i-1}X_{i-1} - (a_{i-1}X_{i-1})^3)$
  \ENDFOR
  \RETURN $X_9$
\end{algorithmic}
\end{algorithm}
\begin{algorithm}
\caption{Iterative SIGN for Uniform on GPU with $d = 3000$}
\begin{algorithmic}
  \STATE \textbf{Input:} $A$
  \STATE Initialize $X_0 = A$
  \STATE Set $a \gets [27.685], b \gets [0.086, 1.471, 1.411, 1.162, 1.021, 1.000]$, \\$c \gets [0.975, 0.5, 0.5, 0.5, 0.5, 0.5]$, 
  \STATE // rounded off to three digits
  \FOR {$i=1$ to 1}
  \STATE $X_i = 0.5(a_{i-1}X_{i-1}+(a_{i-1}X_{i-1})^{-1})$
  \ENDFOR
  \FOR {$i=2$ to 7}
  \STATE $X_i = b_{i-1}X_{i-1} + c_{i-1}(b_{i-1}X_{i-1} - (b_{i-1}X_{i-1})^3)$
  \ENDFOR
  \RETURN $X_7$
\end{algorithmic}
\end{algorithm}
\begin{algorithm}
\caption{Iterative SIGN for Uniform on GPU with $d = 5000$}
\begin{algorithmic}
  \STATE \textbf{Input:} $A$
  \STATE Initialize $X_0 = A$
  \STATE Set $a \gets [29.628], b \gets [0.099, 1.600, 1.427, 1.178, 1.025, 1.000]$, \\$c \gets [0.5, 0.5, 0.5, 0.5, 0.5, 0.5]$, 
  \STATE // rounded off to three digits
  \FOR {$i=1$ to 1}
  \STATE $X_i = 0.5(a_{i-1}X_{i-1}+(a_{i-1}X_{i-1})^{-1})$
  \ENDFOR
  \FOR {$i=2$ to 7}
  \STATE $X_i = b_{i-1}X_{i-1} + c_{i-1}(b_{i-1}X_{i-1} - (b_{i-1}X_{i-1})^3)$
  \ENDFOR
  \RETURN $X_7$
\end{algorithmic}
\end{algorithm}
\begin{algorithm}
\caption{Iterative SIGN for Uniform on GPU with $d = 10000$}
\begin{algorithmic}
  \STATE \textbf{Input:} $A$
  \STATE Initialize $X_0 = A$
  \STATE Set $a \gets [28.790, 0.239], b \gets [0.609, 1.107,1.009, 1]$, 
  \STATE // rounded off to three digits
  \FOR {$i=1$ to 2}
  \STATE $X_i = 0.5(a_{i-1}X_{i-1}+(a_{i-1}X_{i-1})^{-1})$
  \ENDFOR
  \FOR {$i=3$ to 6}
  \STATE $X_i = b_{i-1}X_{i-1} + 0.5(b_{i-1}X_{i-1} - (b_{i-1}X_{i-1})^3)$
  \ENDFOR
  \RETURN $X_6$
\end{algorithmic}
\end{algorithm}

\newpage
\textbf{Different precision} Here we show the results to compute matrix sign on random matrices with spectrum $Unif[-1, 1]$ for float and double precision. For double precision when $d = 5000$, NewtonSchulz becomes as expensive as Newton step whereas Newton step is more effective - hence we use Newton until the end. For float precision the method finds a mixture of Newton and NewtonSchulz. For double we used $\epsilon_{tol} = 1e-11$.

\begin{figure}[H]
    \centering
    \begin{subfigure}[b]{0.45\textwidth}
        \centering
        \includegraphics[width=\linewidth]{Figures/sign_unif_d=5000.png}
        \caption{Float}
    \end{subfigure}
    \hfill
    \begin{subfigure}[b]{0.45\textwidth}
        \centering
        \includegraphics[width=\linewidth]{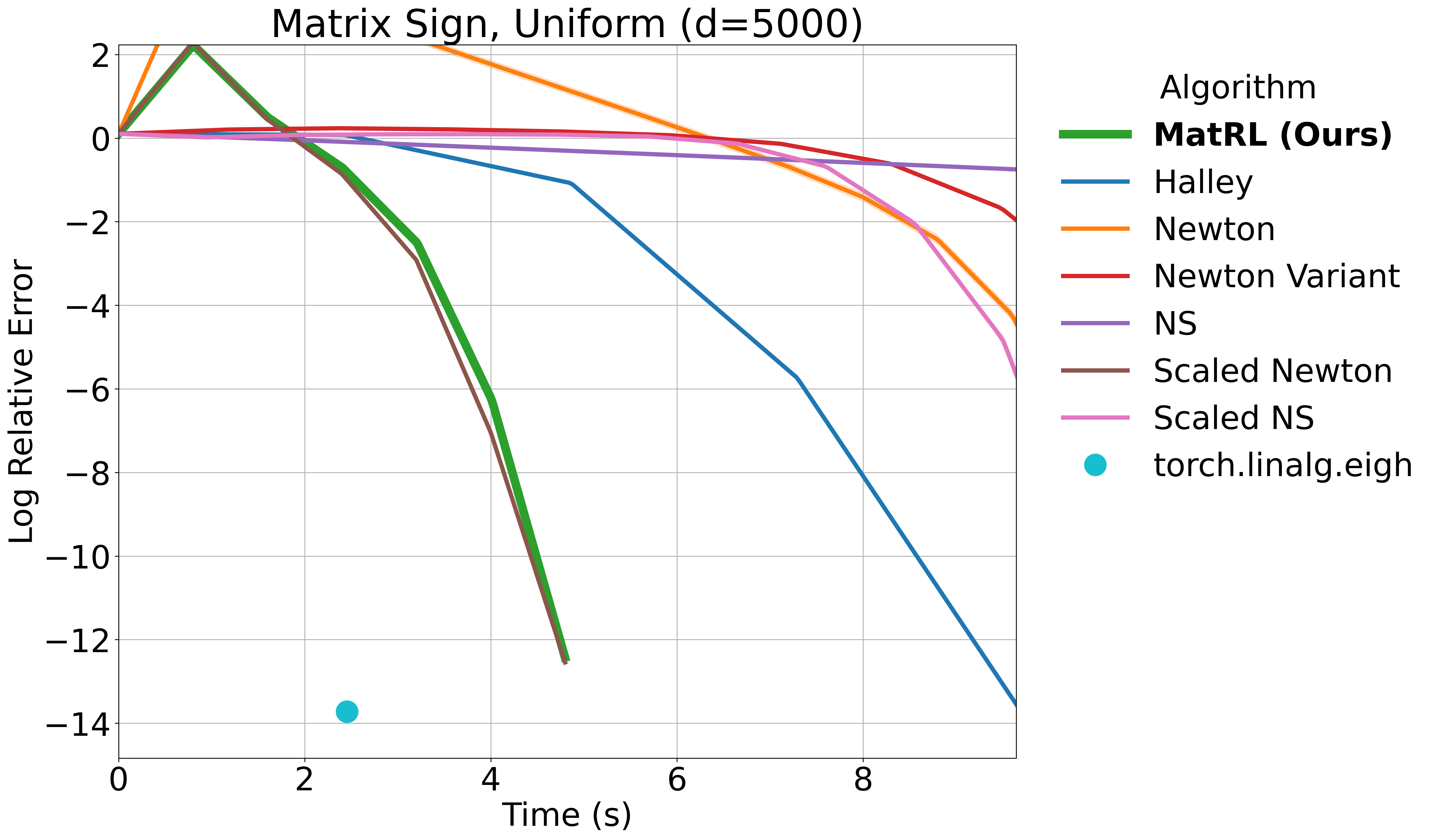}
        \caption{Double}
    \end{subfigure}
    \caption{Computing matrix sign for different precision}
\end{figure}

\begin{algorithm}
\caption{Iterative SIGN for Uniform on GPU with $d = 5000$, DOUBLE}
\begin{algorithmic}
  \STATE \textbf{Input:} $A$
  \STATE Initialize $X_0 = A$
  \STATE Set $a \gets [29.459, 0.290, 0.624, 0.947,0.999,0.999]$,
  \STATE // rounded off to three digits
  \FOR {$i=1$ to 6}
  \STATE $X_i = 0.5(a_{i-1}X_{i-1}+(a_{i-1}X_{i-1})^{-1})$
  \ENDFOR
  \RETURN $X_6$
\end{algorithmic}
\end{algorithm}

\textbf{Different compute} We also run MatRL on GPU and on CPU. The difference that occurs here is also similar in vein: on a GPU, Newton step is $\approx$ x2.28 more costly than a Newtonschulz step, whereas on a CPU it is $\approx$ x1.62 more costly. This makes the algorithm found on CPU use Newton step more. For CPU we also used $\epsilon_{tol} = 1e-11$ for better convergence.

\begin{figure}[H]
    \centering
    \begin{subfigure}[b]{0.45\textwidth}
        \centering
        \includegraphics[width=\linewidth]{Figures/sign_unif_d=5000.png}
        \caption{GPU (NVIDIA RTX A-6000)}
    \end{subfigure}
    \hfill
    \begin{subfigure}[b]{0.45\textwidth}
        \centering
        \includegraphics[width=\linewidth]{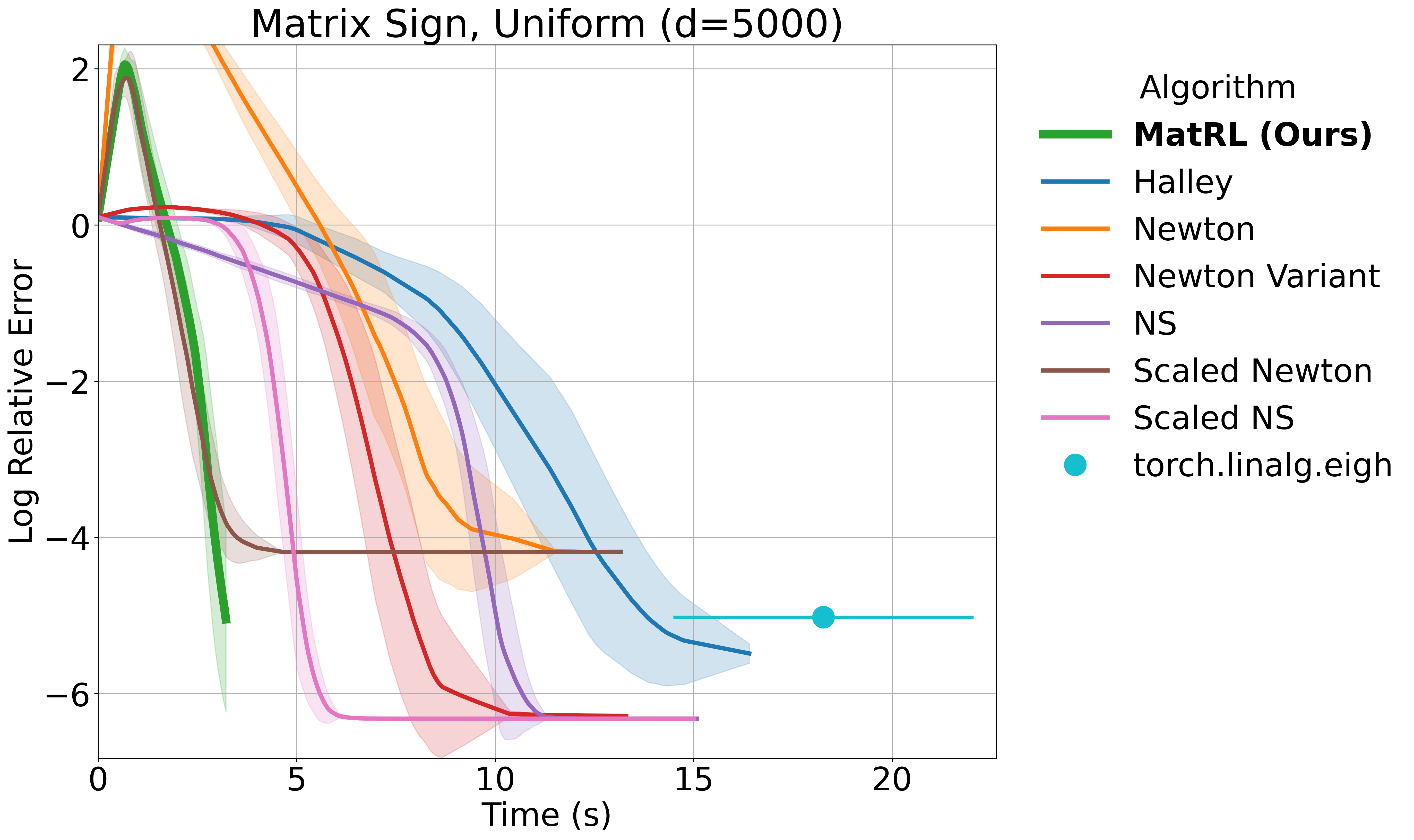}
        \caption{CPU (AMD EPYC 7713 64-Core Processor)}
    \end{subfigure}
    \caption{Computing matrix sign on different machines}
\end{figure}

\begin{algorithm}
\caption{Iterative SIGN for Uniform on CPU with $d = 5000$}
\begin{algorithmic}
  \STATE \textbf{Input:} $A$
  \STATE Initialize $X_0 = A$
  \STATE Set $a \gets [32.273, 0.255, 0.676], b \gets [0.962, 1.001,1.000]$, 
  \STATE // rounded off to three digits
  \FOR {$i=1$ to 3}
  \STATE $X_i = 0.5(a_{i-1}X_{i-1}+(a_{i-1}X_{i-1})^{-1})$
  \ENDFOR
  \FOR {$i=4$ to 6}
  \STATE $X_i = b_{i-1}X_{i-1} + 0.5(b_{i-1}X_{i-1} - (b_{i-1}X_{i-1})^3)$
  \ENDFOR
  \RETURN $X_6$
\end{algorithmic}
\end{algorithm}

\end{document}